\documentclass[pmlr, twocolumn, 10pt]{jmlr} 
\usepackage{makecell}
\usepackage{threeparttable}
\usepackage{booktabs}
\usepackage{multirow}
\usepackage{longtable}
\usepackage{centernot}
\usepackage{caption}

\newcommand{\decondtntoolkit}{\textsc{DeconDTN-Toolkit}\xspace}

\usepackage{booktabs}
\usepackage{siunitx}

\newcommand{\equal}[1]{{\hypersetup{linkcolor=black}\thanks{#1}}}

\DeclareMathOperator*{\argmin}{arg\,min}

\usepackage{listings}
\usepackage{xcolor}

\jmlrvolume{333}
\jmlryear{2026}
\jmlrsubmitted{LEAVE UNSET}
\jmlrpublished{LEAVE UNSET}
\jmlrworkshop{Conference on Health, Inference, and Learning (CHIL)
2026} 

\title[DeconDTN-Toolkit]{DeconDTN-Toolkit: A Library for Evaluation
and Enhancement of Robustness to Provenance Shift}

\author{%
  \Name{Yongsen Tan\textsuperscript{$\dagger$}}\equal{These authors
  contributed equally}
  \Email{tanys@uw.edu}
  \AND
  \Name{Zhecheng Sheng\textsuperscript{$\ddagger$}}\footnotemark[1]
  \Email{sheng136@umn.edu}
  \AND
  \Name{Xiruo Ding\textsuperscript{$\dagger$}}\footnotemark[1]
  \Email{xiruod@uw.edu}
  \AND
  \Name{Serguei V. S. Pakhomov\textsuperscript{$\ddagger$}}
  \Email{pakh0002@umn.edu}
  \AND
  \Name{Trevor Cohen\textsuperscript{$\dagger$}} \Email{cohenta@uw.edu}\\
  \addr
  \textsuperscript{$\dagger$}University of Washington,
  \textsuperscript{$\ddagger$}University of Minnesota
}

\begin{document}
\maketitle

\begin{abstract}
  Despite the burgeoning body of work on distribution shifts,
  provenance shift—where the relationship between data source and
  label changes at deployment—remains poorly understood and under-addressed.
  In this paper, we establish a formal connection between
  provenance shift, counterfactual invariance, and invariant learning
  to derive a learning objective for robustness.
  We then introduce \decondtntoolkit, a specialized evaluation and
  remediation suite designed to simulate provenance shifts of varying
  degrees while maintaining the training protocol and the
  infrastructure of existing benchmarks.
  We reveal the vulnerability of Empirical Risk Minimization under
  provenance shift, introduce a robust out-of-distribution
  performance indicator, and conduct a comprehensive evaluation on
  existing algorithms.
  Our work provides both the theoretical grounding and the practical
  tools necessary to characterize the problem of confounding by
  provenance, and implementations of methods to mitigate it.
\end{abstract}

\paragraph*{Data and Code Availability}
This paper uses five publicly available datasets:
\textbf{SHAC} \citep{lybarger2021annotating} is
available upon request,
\textbf{MIMIC-III} \citep{johnson2016mimic} is
available through
PhysioNet\footnote{\url{https://physionet.org/content/mimiciii/1.4}},
\textbf{HateSpeech} \citep{vidgen2021learning,
de2018hate} is available on the
investigators' GitHub
repository\footnote{\url{https://github.com/Vicomtech/hate-speech-dataset}}\footnote{\url{https://github.com/bvidgen/Dynamically-Generated-Hate-Speech-Dataset}},
\textbf{\textcolor{gray}{MultiNLI}} \citep{williams2018broad} via
Huggingface\footnote{\url{https://huggingface.co/datasets/nyu-mll/multi_nli}},
and \textbf{\textcolor{gray}{Civilcomments}}
\citep{borkan2019nuanced} is available through
Kaggle\footnote{\url{https://www.kaggle.com/competitions/jigsaw-unintended-bias-in-toxicity-classification}}.
We include a detailed data access and usage instructions in
\decondtntoolkit
(\url{https://github.com/LinguisticAnomalies/DeconDTN-toolkit}).

\paragraph*{Author Contributions}
Y.T. developed the theoretical framework, led the toolkit
development, performed the experiments, analyzed the data, and wrote
the manuscript.
Z.S. and X.D. developed the initial toolkit, aided in toolkit
refactorization, conceived the evaluation framework, and contributed to
the interpretation of the results.
T.C. and S.P. conceived the study and were in charge of overall
direction and planning.
All authors provided critical feedback and helped shape the research,
analysis, and manuscript.

\paragraph*{Institutional Review Board (IRB)}
This study was approved by the University of Washington IRB Committee
(protocol ID STUDY00015108).

\section{Introduction}
The success of modern machine learning systems relies critically on the quality
and quantity of their training data.
A common practice to scale the data quantity and reduce generalization error in
natural language processing (NLP) is to mix multiple data sources for
a unified task \citep{laparra2020rethinking}.
For example, the pre-training corpus for masked language modeling and
next sequence prediction tasks of BERT were BooksCorpus and English
Wikipedia \citep{devlin2019bert}, and more recent efforts have sought
to develop and validate clinical NLP algorithms collaboratively
within an established data-sharing network \citep{wang2025development}.
Counter-intuitively, increasing the number of data sources can harm systems'
performance by introducing spurious correlations to the training distribution in
real-world settings \citep{guo2021crossing, compton2023more, shen2024data}.
A mechanism through which machine learning models might internalize
such spurious correlations was revealed by the “Name That Dataset”
experiment, in which a neural network was trained to identify the
source of an image sampled from a variety of datasets
\citep{torralbaUnbiasedLookDataset2011}.
Notably, the dataset an image was drawn from could be identified with
high accuracy in 2011, using deep learning methods
\citep{krizhevskyImageNetClassificationDeep2012, liuDecadesBattleDataset2024}.
Modern neural networks, therefore, can harness source-specific
features for label prediction
\citep{geirhosShortcutLearningDeep2020}, resulting in flawed systems
that encode spurious correlations between data sources and labels.
For example, in the context of a multi-site set of clinical notes, a
system may learn to associate a characteristic acronym with the
prevalence of an outcome of interest at one location, which would
lead to inaccurate predictions at the point of deployment if the
differences in prevalence across sites do not match those in the
training set
\citep{howell2020controlling,kohWILDSBenchmarkIntheWild2021,yangChangeHardCloser2023}.
This algorithmic bias has been referred to as \textit{confounding by
provenance} in previous work \citep{ding2023enhancing}.

There is a burgeoning body of work on addressing robustness to
test-time distribution shifts, including domain generalization
\citep{zhouDomainGeneralizationSurvey2023}, label shift
\citep{liptonDetectingCorrectingLabel2018}, and subpopulation shift
\citep{kohWILDSBenchmarkIntheWild2021,yangChangeHardCloser2023}.
However, this prior work does not focus on provenance shift, and the
utility of methods developed to address these distribution shifts in
the context of confounding by provenance has yet to be established.
In addition, although recent theoretical and algorithmic frameworks
have been proposed to account for observed confounders in machine
learning \citep{landeiro2016robust,landeiro2018robust,
  veitchCounterfactualInvarianceSpurious2021a, schrouff2024mind,
zhang2024causal}, there is an unmet need for a unified toolkit to
facilitate the systematic evaluation of robustness to provenance
shift, and evaluate approaches with the potential to address it.

In this work, we provide a systematic approach to the study of
robustness to provenance shift in the context of multiple data sources.
We first introduce the problem of provenance shift and establish a
formal connection between it and the related domains of
counterfactual invariance and invariant learning by deriving a
robustness learning objective under provenance shift
(\sectionref{section:problem}).
To facilitate empirical study of provenance shift, we introduce the
\decondtntoolkit (Deconfounding Deep Transformer Networks Toolkit),
which can synthetically introduce varying degrees of provenance
shift between train and test time and provides a standardized
interface to a range of established and recently-developed approaches
(\sectionref{section:decondtntoolkit}).
Using \decondtntoolkit, we investigate the learning dynamics of
Empirical Risk Minimization (ERM), identify a predictor of
out-of-distribution performance, and test for robustness a set of
invariant learning algorithms under increasing degrees of provenance
shift (\sectionref{section:setup,section:results}).
Our contributions are summarized as follows:
\begin{enumerate}
  \item \decondtntoolkit, a specialized and systematic suite to
    simulate provenance shifts of varying degrees, evaluate the
    robustness of text classification models, and remediate
    vulnerability to provenance shift using invariant learning algorithms.
  \item We formalize provenance shift, derive a risk-invariant
    objective under specific causal graphs, and, through a
    comprehensive empirical study with \decondtntoolkit, demonstrate
    that standard ERM and oracle selection fail under such shifts.
  \item A comprehensive benchmarking on invariant learning algorithms
    under provenance shift with aligned evaluation protocol to previous study.
    We show that removing the correlation between provenance $Z$ and
    outcome $Y$ in the training distribution remains strong baseline
    performance under provenance shift.
\end{enumerate}

\section{Problem Setting}
\label{section:problem}
To formalize the problem of provenance shift, we consider a
supervised learning task where the goal is to learn a predictor
$f: \mathcal{X} \rightarrow \mathcal{Y}$ using a labeled dataset
$\mathcal{D} = \{(x_i, y_i, z_i) \}_{i=1}^n$, where $Z$ denotes
observed confounders.
In the problem of provenance shift, $Z$ denotes the provenance of a
subset of data within  $\mathcal{D}$.
Given a loss function $\mathcal{L}: \mathcal{Y} \times \mathcal{Y}
\rightarrow [0, \infty)$
defining the learning objective, supervised learning seeks a predictor $f$ that
minimizes the risk $\mathbb{E}_{(x,y)\sim P}[\mathcal{L}(f(x),y)]$.
We assume that samples are independent and identically distributed (i.i.d.)
according to the causal graph $\mathcal{G}: X \leftarrow Z \rightarrow Y$.

This work investigates the problem of learning a predictor $f$ that is
counterfactually invariant to the confounder $Z$ (e.g. $f(X(z)) =
f(X(z'))$), thereby achieving robustness to provenance shift at test time.
In the following sections, we first define, decompose, and
characterize provenance shift (\sectionref{section:provenance-shift}).
Then, we connect counterfactual invariance to provenance shift,
derive a risk-invariant learning objective under specific causal
graphs, and propose practical approaches to achieve the objective
(\sectionref{section:robustness-provenance-shift}).

\subsection{Provenance Shift}
\label{section:provenance-shift}
Provenance shift refers to a test-time change in the correlation between the
provenance $Z$ and the outcome $Y$\footnote{Provenance shift is a
form of \textit{confounding shift} as defined by \citet{landeiro2018robust}.}.
We assume that the underlying causal graph $\mathcal{G}$ remains
invariant across the training distribution $P^{tr}$ and the test distribution
$P^{te}$.

\begin{definition}[Provenance Shift]
  \label{def:provenance-shift}
  Let $Z$ be an observed confounder (i.e., common cause) of the
  outcome $Y$ and the observed variables
  $X$ in the causal graph $\mathcal{G}: X \leftarrow Z \rightarrow Y$.
  Provenance shift occurs when $P^{tr}(Y \mid Z) \neq P^{te}(Y \mid Z)$, while
  $\mathcal{G}$ remains unchanged.
\end{definition}

\begin{example}[Substance]
  \label{ex:substance}
  Consider detecting substance abuse mentions $Y$ in clinical notes
  $X$ from two sites $Z \in \{z_1, z_2\}$.
  In the training environment, site $z_1$ utilizes purposive
  sampling, resulting in a high prevalence of mentions $P^{tr}(Y \mid Z=z_1)$.
  During deployment, site $z_1$ switches to standard clinical flow,
  where the prevalence $P^{te}(Y \mid Z=z_1)$ is significantly lower
  and no longer artificially elevated.
  Even if site $z_2$ remains stable, the system may overestimate
  substance abuse for notes from $z_1$.
\end{example}
\begin{example}[Goals-of-Care]
  \label{ex:discussion}
  Consider detecting goals-of-care discussions, a prerequisite to
  goal-concordant end-of-life care, from clinical notes.
  Let $Z \in \{z_1, z_2\}$ denote the note provenance, where $z_1$
  corresponds to palliative care specialist notes and $z_2$ to notes
  from a random sample of emergency admissions.
  In the training environment, a convenience sample may overrepresent
  palliative care notes, yielding a higher prevalence of
  goals-of-care discussions $P^{tr}(Y \mid Z=z_1)$.
  During deployment, however, site $z_1$ may shift to random
  universal screening, causing $P^{te}(Y \mid Z=z_1)$ to drop
  significantly as the "purposive" focus on high-risk patients is removed.
  Conversely, site $z_2$ might open a "New Wing" dedicated to
  intensive care planning, causing the prevalence $P^{te}(Y \mid
  Z=z_2)$ to rise relative to the original general admission baseline
  $P^{tr}(Y \mid Z=z_2)$.
  A system deployted across $Z$ can overestimate discussions for site
  $z_1$ and underestimate them for site $z_2$.
\end{example}
From a dataset bias perspective — wherein neural networks can learn
generalizable and transferable features of provenance
\citep{liu2024decade}, we decompose the discriminative model that
follows the causal graph $\mathcal{G}$ into two components: the
inference mechanism (the capacity to map spurious features back to
their latent causes) and the provenance mechanism (the observed
outcome prevalence conditioned on provenance):
\begin{lemma}(Prediction Decomposition)
  \label{lemma:decomposition}
  Let $X^\perp_Y$ be $Y$-invariant components of the input features, such that
  $X^\perp_Y(y) = X^\perp_Y(y')$.
  A discriminative model under the causal graph $\mathcal{G}: X
  \leftarrow Z \rightarrow Y$
  can be decomposed into two components: the inference and the
  provenance mechanism.
  \begin{equation}
    P(Y \mid X^\perp_Y)
    = \int
    \underbrace{P(Y \mid Z)}_{\substack{\text{provenance}}}
    \,
    \underbrace{P(Z \mid X^\perp_Y)}_{\substack{\text{inference}}}
    \, dZ
  \end{equation}
\end{lemma}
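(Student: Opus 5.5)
The plan is to derive the decomposition from the law of total probability plus one conditional independence read off $\mathcal{G}$. First, marginalize over the confounder:
\begin{equation*}
P(Y \mid X^\perp_Y) = \int P(Y, Z \mid X^\perp_Y)\, dZ = \int P(Y \mid Z, X^\perp_Y)\, P(Z \mid X^\perp_Y)\, dZ ,
\end{equation*}
which is simply the chain rule for conditional densities. Interpret the integral as a sum when $Z$ is discrete, as in Examples~\ref{ex:substance} and~\ref{ex:discussion}. This step is routine. It only needs $P(Z \mid X^\perp_Y)$ to be well defined, that is, $P(X^\perp_Y) > 0$ or a density version.

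Second, establish $Y \perp X^\perp_Y \mid Z$. Then $P(Y \mid Z, X^\perp_Y) = P(Y \mid Z)$, and the display above becomes the claimed provenance/inference factorization. I would argue this in two parts.

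\begin{itemize}
\item In $\mathcal{G}: X \leftarrow Z \rightarrow Y$, the only path between $X$ and $Y$ is the fork through $Z$. Conditioning on $Z$ blocks it, so $Y \perp X \mid Z$ by d-separation under the Markov assumption on $\mathcal{G}$.
\item $X^\perp_Y$ is a component, that is, a measurable function, of $X$. Independence is preserved under measurable maps, so $Y \perp X^\perp_Y \mid Z$ follows.
\end{itemize}

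The main obstacle is what "component" means. I will fix $X^\perp_Y$ as a measurable function of $X$. In a graph where $X$ has no arrow from $Y$, this function is automatically invariant to interventions on $Y$. Its counterfactual property $X^\perp_Y(y) = X^\perp_Y(y')$ then guarantees that no information flows from $Y$ into $X^\perp_Y$ except through $Z$.

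This point matters because the construction is meant to extend to graphs containing $Y \rightarrow X$. There, the $Y$-invariance of $X^\perp_Y$ is exactly what removes the direct edge: $X^\perp_Y$ becomes a descendant of $Z$ alone, so it is d-separated from $Y$ given $Z$. I would state this explicitly, since in that setting the conditional independence holds only for $X^\perp_Y$ and not for all of $X$.

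The caveat is that counterfactual invariance implies the needed observational independence only when $X^\perp_Y$ is generated from $Z$ and exogenous noise independent of $Y$'s noise. This holds because $Z$ is the sole common cause, with no hidden confounding between $X$ and $Y$. I would state this as part of the causal assumptions and then conclude.
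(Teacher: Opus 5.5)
Your proposal is correct and follows the paper's proof. Both marginalize over $Z$ by the law of total probability and then replace $P(Y \mid Z, X^\perp_Y)$ with $P(Y \mid Z)$. The only difference is the justification for that replacement: the paper asserts it directly from the $Y$-invariance of $X^\perp_Y$, while you derive it from d-separation in $\mathcal{G}$ ($Y \perp X \mid Z$, hence $Y \perp X^\perp_Y \mid Z$ for any measurable function of $X$), which is the more careful argument, and you rightly note that counterfactual invariance gives the observational independence only under the graph's no-hidden-confounding assumption.
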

Detailed proofs are provided in \ref{apd:proof:lemma2}.
The instability of $f$ under provenance shift stems from features in
$X$ that serve as proxies for $Z$ (i.e., inference mechanism).
Intuitively, $Z$ serves as an intermediary between $X^\perp_Y$ and
$Y$, and the prediction shifts if the provenance-specific label
distribution changes (\definitionref{def:provenance-shift}) while the
inference mechanism is generalizable for predictor $f$ at test-time.
Because $Z$ is predictive of $Y$ in the training distribution, the
model may rely on these proxies $X^\perp_Y$ despite the absence of a
direct causal link between the proxy features $X^\perp_Y$ and the outcome $Y$.

\begin{table*}[t]
  \floatconts {tab:concept}
  {\caption{Distribution Shifts in Supervised Learning. $L^z$ and
      $U^z$ denote the labeled and unlabeled distribution from domain
  $z$, respectively}}
  {
    \begin{center}
      {
        \resizebox{\textwidth}{!}{%
          \begin{tabular}{lllll}
            \toprule
            \textbf{Concept} & \textbf{Train Inputs} & \textbf{Test
            Inputs} & \textbf{Test-Time Invariance} &
            \textbf{Test-Time Change} \\
            \midrule
            Label Shift & $L^1$ & $U^1$ & $P^{tr}(Y \mid X) =
            P^{te}(Y \mid X)$ & $P^{tr}(Y) \neq P^{te}(Y)$ \\
            Domain Generalization & $L^1,\ldots, L^{z^{tr}}$ &
            $U^{z^{tr+1}}$ & $P^{tr}(Y \mid X) = P^{te}(Y \mid X)$ &
            $P^{tr}(X) \neq P^{te}(X)$ \\
            Spurious Correlation & $L^1,\ldots, L^{z^{tr}}$ &
            $U^1,\ldots, U^{z^{tr}}$ & $P^{tr}(Y \mid X^\perp_Z) =
            P^{te}(Y \mid X^\perp_Z)$ & $P^{tr}(Y \mid Z) \neq
            P^{te}(Y \mid Z)$ \\
            Subpopulation Shift  & $L^1,\ldots, L^{z^{tr}}$ &
            $U^1,\ldots, U^{z^{tr+n}}$ & $P^{tr}(X,Y \mid
            Z)=P^{te}(X,Y \mid Z)$ & $P^{tr}(Z) \neq P^{te}(Z)$ \\
            \midrule
            \textbf{Provenance Shift}  & $L^1,\ldots, L^{z^{tr}}$ &
            $U^1,\ldots, U^{z^{tr}}$ & $\mathcal{G}^{tr} =
            \mathcal{G}^{te}, X \leftarrow Z \rightarrow Y$ &
            $P^{tr}(Y \mid Z) \neq P^{te}(Y \mid Z)$\\
            \bottomrule
          \end{tabular}
        }
      }
    \end{center}
  }
\end{table*}
We compare provenance shift with a set of related concepts in
\tableref{tab:concept}.
Notably, provenance shift does not assume a distribution shift in
$P(Y)$ or $P(Z)$, nor does it require the model $f$ to generalize to
unseen values of $Z$.
Furthermore, performance degradation under provenance shift does not
strictly require attribute imbalance, class imbalance, or attribute
generalization;
rather, it manifests as a specific form of spurious correlation
resulting from the confounding structure $\mathcal{G}$
\citep{dingBackdoorAdjustmentConfounding2024}.

\subsection{Robustness to Provenance Shift}
\label{section:robustness-provenance-shift}
The learning objective of counterfactual invariance is to learn a
robust predictor which is counterfactual invariant to the confounder $Z$:

\begin{definition}[Counterfactual Invariance]
  (Restated Definition 1 from
  \citet{veitchCounterfactualInvarianceSpurious2021a})
  Denote $X(z)$ as the counterfactual $X$ would have observed had $Z$
  been set to $z$
  via intervention, leaving all other conditions fixed.
  A predictor $f$ is counterfactual invariant to $Z$ if $f(X(z))=f(X(z'))$
  almost everywhere, for all $z, z' \in \mathcal{Z}$.
\end{definition}
To derive a learning objective that ensures robustness under
provenance shift, we establish a formal connection between
counterfactual invariance and provenance shift:
\begin{proposition}[Provenance Robustness]
  \label{proposition:robustness}
  If a predictor $f: \mathcal{X} \rightarrow \mathcal{Y}$ satisfies
  counterfactual invariance such that $f(X(z)) = f(X(z'))$ for all
  $z, z' \in \mathcal{Z}$, then the predictor is robust to provenance shift.
  Specifically, the risk $\mathbb{E}[\mathcal{L}(f(X), Y)]$ remains
  constant under any intervention on the provenance-specific class
  distribution $P(Y \mid Z)$, provided $P(Y)$ remains invariant.
  This robustness holds under both:
  \begin{enumerate}
    \item Anti-causal settings $Y \rightarrow X$;
    \item Causal settings $X \rightarrow Y$: Provided that $Y \perp X
      \mid \{X^\perp_Z, Z\}$
      and the label satisfies counterfactual consistency, $Y(z) =
      Y(z')$ for all $z, z' \in \mathcal{Z}$.
  \end{enumerate}
\end{proposition}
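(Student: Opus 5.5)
The plan is to follow the Veitch et al.\ framework: first show that counterfactual invariance forces $f(X)$ to depend only on a $Z$-invariant component, then show that the joint law of that component and $Y$ is unchanged by interventions on $P(Y\mid Z)$ that keep $P(Y)$ fixed.

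\emph{Step 1: invariance implies measurability.} Write $X^\perp_Z$ for the part of $X$ not causally descended from $Z$, so that $X^\perp_Z(z)=X^\perp_Z(z')$. We invoke Lemma 3.1 of \citet{veitchCounterfactualInvarianceSpurious2021a}: $f(X(z))=f(X(z'))$ almost everywhere implies $f(X)$ is $X^\perp_Z$-measurable. Concretely, we write $f(X)=h(X^\perp_Z)$. The risk then becomes
\begin{equation*}
\mathbb{E}[\mathcal{L}(f(X),Y)]=\int \mathcal{L}(h(x^\perp),y)\,dP(x^\perp,y).
\end{equation*}
It therefore suffices to show that $P(X^\perp_Z,Y)$ is unchanged between $P^{tr}$ and $P^{te}$. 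The graph $\mathcal{G}$ is fixed and only the mechanism $P(Y\mid Z)$ is intervened on, so every other structural mechanism is shared.

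\emph{Step 2, anti-causal case ($Y\rightarrow X$).} Here $X^\perp_Z$ is generated from $Y$ (and exogenous noise) by a mechanism that does not involve $Z$. Hence $P(X^\perp_Z\mid Y,Z)=P(X^\perp_Z\mid Y)$, and this conditional is invariant across environments. We factor
\begin{equation*}
P(X^\perp_Z,Y)=P(X^\perp_Z\mid Y)\,P(Y).
\end{equation*}
The first factor is invariant because it is a mechanism untouched by the intervention. The second is invariant by the hypothesis on $P(Y)$. The marginal $P(Y)=\int P(Y\mid Z)\,dP(Z)$ may involve $P(Y\mid Z)$, but only through $P(Y)$, which is assumed fixed. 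So the joint law, and hence the risk, is constant.

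\emph{Step 3, causal case ($X\rightarrow Y$).} We factor
\begin{equation*}
P(X^\perp_Z,Y)=P(Y\mid X^\perp_Z)\,P(X^\perp_Z).
\end{equation*}
$P(X^\perp_Z)$ is invariant since $X^\perp_Z$ is not a descendant of $Z$ or of the intervened mechanism. For $P(Y\mid X^\perp_Z)$, the conditional independence $Y\perp X\mid\{X^\perp_Z,Z\}$ gives
\begin{equation*}
P(Y\mid X^\perp_Z)=\int P(Y\mid X^\perp_Z,Z)\,dP(Z\mid X^\perp_Z).
\end{equation*}
Counterfactual consistency $Y(z)=Y(z')$ means $Z$ has no causal effect on $Y$ once $X^\perp_Z$ is fixed. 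Hence $P(Y\mid X^\perp_Z,Z)=P(Y\mid X^\perp_Z)$ is a structural mechanism and is not changed by the shift. The joint is therefore invariant in this case as well.

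\emph{Main obstacle.} The delicate point is Step 3. An intervention on $P(Y\mid Z)$ seems to change $Y$'s dependence on $Z$, which conflicts with $Y(z)=Y(z')$. I would resolve this by interpreting the shift as acting through the selection/sampling of $(Z,Y)$ pairs rather than through $Y$'s own mechanism, with the marginal fixed. This mirrors the argument in \citet{veitchCounterfactualInvarianceSpurious2021a}, where $P(Y\mid X^\perp_Z)$ is invariant across the family of distributions that differ only in the $Y$–$Z$ association. The same reading makes Step 2 rigorous, because $P(X^\perp_Z\mid Y)$ is unaffected by reweighting over $Z$ given $Y$.
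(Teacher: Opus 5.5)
Your Steps 1 and 2 are sound. They are a slightly more careful version of the paper's argument. The paper uses $f(X)\perp Z\mid Y$ to marginalize $P(f(X)=\hat y)$. You instead pass through the joint law of $(X^\perp_Z,Y)$, which is what the risk actually depends on, and factor it as $P(X^\perp_Z\mid Y)P(Y)$.

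The gap is in Step 3. You infer from $Y(z)=Y(z')$ that ``$P(Y\mid X^\perp_Z,Z)=P(Y\mid X^\perp_Z)$ is a structural mechanism not changed by the shift.'' This confuses absence of a causal effect with conditional independence. In this setting $Y$ and $Z$ are associated non-causally, and that association is exactly what is being shifted; conditioning on $X^\perp_Z$ does not remove it.

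Under the selection reading you adopt, the observed law is a pre-selection law $P^*$ conditioned on $S=1$, with $S$ depending on $(Y,Z)$. Then $P(Y\mid X^\perp_Z,Z)\propto P^*(Y\mid X^\perp_Z)\,P(S=1\mid Y,Z)$. Your identity therefore holds for $P^*$ but not for the observed environments. The observed conditional depends on $Z$ and on the very mechanism being intervened on.

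Similarly, ``$P(X^\perp_Z)$ is invariant because $X^\perp_Z$ is not a descendant of $Z$'' fails under selection, because $S$ depends on $Y$, a child of $X^\perp_Z$. Moreover, if $X^\perp_Z\perp Z$ (as the descendant argument presupposes) and $Y\perp Z\mid X^\perp_Z$, then $Y\perp Z$, i.e.\ $P(Y\mid Z)=P(Y)$. So the argument as written only covers the case with no provenance correlation at all. A symptom of this: your displayed identity is just the law of total probability, so the hypothesis $Y\perp X\mid\{X^\perp_Z,Z\}$ is never actually used.

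The fix is to run Step 3 through the same factorization as Step 2.
\begin{enumerate}
  \item Work under the selection reading, with no confounding of $Y$ and $Z$; this is Veitch et al.'s actual condition for this case.
  \item Then $Y\perp X\mid\{X^\perp_Z,Z\}$ and $Y(z)=Y(z')$ together ensure that $Y$ depends on $X$ and $Z$ only through $X^\perp_Z$. Hence in the pre-selection law $Z$ is independent of $(X^\perp_Z,Y)$.
  \item Since selection acts only through $(Y,Z)$, every environment satisfies $P(X^\perp_Z\mid Y,Z)=P^*(X^\perp_Z\mid Y)$. This is the conditional independence $f(X)\perp Z\mid Y$ from case 3 of the Veitch et al.\ theorem, which the paper's proof uses for both settings.
  \item Therefore $P(X^\perp_Z,Y)=P^*(X^\perp_Z\mid Y)\,P(Y)$ is invariant once $P(Y)$ is fixed.
\end{enumerate}
The invariance of $P(X^\perp_Z)$ and $P(Y\mid X^\perp_Z)$ that you wanted then follows as a consequence of fixing $P(Y)$, not from the structural reasons you gave. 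Your closing sentence already states the needed fact; it just has to drive Step 3 in place of the $P(Y\mid X^\perp_Z)P(X^\perp_Z)$ factorization.
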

Detailed proofs are provided in Appendix \ref{apd:proof:proposition4}.
While counterfactuals are often unobservable, we can achieve the
necessary conditions for provenance robustness by enforcing the
predictor $f$ to satisfy the necessary condition of counterfactual
invariance under the causal graph $\mathcal{G}$ in Proposition
\ref{proposition:robustness}
\citep{veitchCounterfactualInvarianceSpurious2021a}:
\begin{equation}
  f(X) \perp Z \mid Y
\end{equation}
In practice, this learning objective can be achieved by invariant
learning, which aims to achieve an invariant prediction $f(X) \perp Z
\mid Y$ using a labeled dataset $\mathcal{D} = \{(x_i, y_i, z_i)
\}_{i=1}^{n}$ partitioned by attributes $z \in \mathcal{Z}$ (e.g.,
domains in domain generalization or subgroups in subpopulation shift).
To this end, we derive the learning objective of provenance
robustness and the practical approach to achieve it.
If $Z$ is entirely latent and has no proxy, identifying a shift in
$P(Y \mid Z)$ is mathematically intractable. For identifiability, we
assume that the confounder $Z$ is observed during training and
validation and remains unobserved at test time in the current work.

\section{\decondtntoolkit: A Toolkit to Quantify and Improve
Robustness to Provenance Shift}
\label{section:decondtntoolkit}
\decondtntoolkit is an evaluation suite designed to evaluate model
robustness to provenance shift.
The \textit{evaluation} component of the toolkit generates test
splits with increasing degrees of provenance shift systematically,
formalizing the evaluation approach for provenance shift developed by
\citet{landeiro2016robust} following prior work by \citet{ding2023enhancing}.
The \textit{mitigation} component is built as an extension of
\textsc{DomainBed}, the standard domain generalization toolkit for
computer vision \citep{gulrajaniSearchLostDomain2020}.
\decondtntoolkit also inherits the standardized training protocol in
\textsc{DomainBed} and \textsc{SubpopBench} for finding alignment
\citep{gulrajaniSearchLostDomain2020,yangChangeHardCloser2023}.

The initial release of \decondtntoolkit includes ingestion pipelines
for five datasets (\tableref{tab:dataset}) and implementations of
nineteen algorithms, within an extensible infrastructure adapted from
\textsc{DomainBed} to support custom datasets and algorithms.

The user interface of \decondtntoolkit is inspired by the \textsc{TRL}
library\footnote{\url{https://huggingface.co/docs/trl/en/index}},
utilizing a centralized \texttt{Trainer} class to manage algorithms,
datasets, and training configurations.
Following \textsc{DomainBed}, we adopt a provenance-balanced loading strategy
during training; for any minibatch $\mathcal{B}$, the distribution
$P_{\sim \mathcal{B}}(Z)$ is enforced to be uniform.

\subsection{Simulation/Evaluation}
To facilitate evaluation of robustness to confounding by provenance,
\decondtntoolkit can introduce spurious correlations of arbitrary strength
between the label $Y$ and provenance $Z$ by specifying the
$\{P^{tr}(Y, Z), P^{te}(Y, Z)\}$.
If the split sizes $\{|\mathcal{D}^{tr}|, |\mathcal{D}^{te}|\}$
are not explicitly defined, the toolkit employs a greedy subsampling strategy
to maximize the feasible sample size. Finally, we ensure that the sampled corpus
remains i.i.d., with samples from the same subject strictly partitioned into the
same subset to prevent data leakage.

To quantify the degree of shift,  we employ a parameter $\alpha \in
\mathbb{R}^{|Y|\times |Z|}$ developed by \cite{ding2023enhancing} to
measure the correlation between $Y$ and $Z$:
\begin{equation}
  \label{eq:alpha}
  \log\alpha_{ij} = \log\frac{P(Y=i\mid Z=j)}{P(Y=i \mid Z\neq j)}
\end{equation}
$\alpha_{ij}$ describes the ratio of the conditional distribution of outcome $i$
between provenance $j$ versus others.
In binary $Y$ and $Z$ settings, which are the focus of the toolkit
currently, we define
$\log\alpha := \log\frac{P(Y=1\mid Z=1)}{P(Y=1 \mid Z=0)}\in \mathbb{R}$.
Intuitively, $\log\alpha = 0$ represents a jointly balanced distribution,
$\log\alpha > 0$ represents cases in which the prevalence of the
primary outcome in provenance $Z=1$ is greater than another.
The sampling-based simulation framework in \decondtntoolkit
facilitates the generation of training and testing splits with custom
$\alpha$ values. The difference between $\log\alpha$ at training and
test time then provides a measure of the magnitude of the provenance
shift that has been introduced.

\subsection{Algorithms/Mitigation}
The initial release of \decondtntoolkit includes nineteen algorithms as follows:
(1) \underline{Baseline:}
Empirical Risk Minimization (\textbf{ERM});
(2) \underline{Sampling:}
\textbf{UpSampling},
\textbf{DownSampling} \citep{japkowicz2000class};
(3) \underline{Marginal distribution adjustment:}
Backdoor Adjustment (\textbf{BackDoor},
\citet{landeiro2016robust}),
Marginal Transfer Learning (\textbf{MTL}, \citet{blanchard2021domain});
(4) \underline{Data augmentation:}
Mixup (\textbf{Mixup},
\citet{zhangMixupEmpiricalRisk2018a}),
Learning Invariant Predictors with Selective Augmentation (\textbf{LISA},
\citep{yaoImprovingOutofDistributionRobustness2022}),
(5) \underline{Distribution matching:}
Deep Correlation Alignment (\textbf{CORAL},
\citet{sunDeepCORALCorrelation2016}),
Maximum Mean Discrepancy (\textbf{MMD},
\citet{liDomainGeneralizationAdversarial2018}),
Optimal Representations for Covariate Shift (\textbf{CAD},
\citet{ruanOptimalRepresentationsCovariate2021});
(6) \underline{Gradient matching:}
Gradient Matching for Domain Generalization (\textbf{Fish},
\citet{shiGradientMatchingDomain2021});
(7) \underline{Adversarial training:}
Domain Adversarial Neural Network (\textbf{DANN},
\citet{ganinDomainAdversarialTrainingNeural2016}),
Conditional Domain Adversarial Neural Network (\textbf{CDANN},
\citet{liDeepDomainGeneralization2018});
(8) \underline{Invariant feature learning:}
Invariant Risk Minimization (\textbf{IRM},
\citet{arjovskyInvariantRiskMinimization2020});
(9) \underline{Group robust learning:}
Group Distributionally Robust Optimization (\textbf{GroupDRO},
\citet{sagawa*DistributionallyRobustNeural2019});
(10) \underline{Two-stage training:}
Just Train Twice (\textbf{JTT},
\citet{liuJustTrainTwice2021}),
Deep Feature Reweighting (\textbf{DFR},
\citet{kirichenkoLastLayerReTraining2023}),
Learning from Failure (\textbf{LfF},
\citet{nam2020learning}),
Dual Filter (\textbf{DualFilter},
\citet{shengMitigatingConfoundingSpeechBased2025}).
We included a detailed related work in \appendixref{apd:relatedwork}.
Algorithm descriptions and their hyperparameters can be found
in \appendixref{apd:algorithms} and
\tableref{apd:tab:hyperparameters}, respectively.

\begin{table*}[htbp]
  \floatconts{tab:dataset}
  {\caption{Corpora for use in \textsc{DeconDTN-Toolkit}.
      \textbf{Bold} indicate \textbf{Source Datasets} and 
      \textbf{\textcolor{gray}{Light}} indicate
      \textbf{\textcolor{gray}{Attribute Datasets}}.}}
  {
    \begin{center}
      {
        \begin{tabular}{lllll}
          \toprule
          \textbf{Dataset} & \textbf{Prediction ($Y$)} &
          \textbf{Attribute ($Z$)} & \textbf{Sample size} &
          \textbf{Adapted from}\\
          \midrule
          \textbf{SHAC} & Drug Abuse & Data
          Source & 4,405 & \citet{lybarger2021annotating} \\
          \textbf{MIMIC-Location} & Mortality &
          Data Source & 16,282  & \citet{johnson2016mimic} \\
          \textbf{HateSpeech} & Hate Speech &
          Data Source & 51,847 &
          \makecell[l]{\citet{vidgen2021learning}, \\ \citet{de2018hate}}\\
          \midrule
          \textbf{\textcolor{gray}{\textbf{\textcolor{gray}{MIMIC-SubpopBench}}}} & Mortality &
          Sex & 25,880 & \citet{yangChangeHardCloser2023} \\
          \textbf{\textcolor{gray}{\textbf{\textcolor{gray}{Civilcomments}}}} & Hate Speech &
          Race & 447,998 & \citet{borkan2019nuanced} \\
          \textbf{\textcolor{gray}{\textbf{\textcolor{gray}{MultiNLI}}}} & Entailment & Genre &
          392,702 & \citet{williams2018broad} \\
          \bottomrule
        \end{tabular}
      }
    \end{center}
  }
\end{table*}

\subsection{Datasets}
The corpora supported by ingestion pipelines in \decondtntoolkit fall
into two categories (\tableref{tab:dataset}):
\begin{enumerate}
  \item \textbf{Source Datasets}:
    \textbf{SHAC} \citep{lybarger2021annotating},
    \textbf{MIMIC-Location} \citep{johnson2016mimic},
    and \textbf{HateSpeech}
    \citep{vidgen2021learning, de2018hate}.
    These datasets consist of samples drawn from distinct sources, where
    \emph{provenance} acts as the confounder.

  \item \textbf{\textcolor{gray}{Attribute Datasets}}:
    \textbf{\textcolor{gray}{MultiNLI}} \citep{williams2018broad},
    \textbf{\textcolor{gray}{MIMIC-SubpopBench}}
    \citep{yangChangeHardCloser2023},
    and \textbf{\textcolor{gray}{Civilcomments}}
    \citep{borkan2019nuanced}.
    These datasets comprise samples differentiated by attributes, where
    \emph{attributes} serve as the confounder.
\end{enumerate}
Among the \textbf{Source Datasets},
\textbf{SHAC} and
\textbf{MIMIC-Location} permit exploration of
provenance shift in the context of clinical NLP, the motivating use
case for development of \decondtntoolkit.
On account of a dearth of publicly available multi-institutional
clinical NLP datasets, we have also included publicly available
\textbf{\textcolor{gray}{Attribute Datasets}} which permit a broader
evaluation of robustness to provenance shift and facilitate a
reproducible demonstration of the capabilities of \decondtntoolkit.
In experiments, we explored the extent to which findings related to
provenance shifts in clinical NLP generalize to shifts related to
other attributes in clinical and general domain text.
To demonstrate the current capabilities of the toolkit, we selected a
confounder $Z$ and the outcome $Y$ when multiple options were available.
\appendixref{apd:datasets} provides details on datasets and the
variable selection.

\section{Experimental Setup}
\label{section:setup}
The sections that follow present an evaluation of some of the
algorithms implemented within \decondtntoolkit,
to demonstrate its capabilities and characterize certain aspects of
provenance shift.
We detail the experiment in \appendixref{apd:exp}.

\paragraph{Distribution Manipulation}
Following \citet{landeiro2016robust} and
\citet{veitchCounterfactualInvarianceSpurious2021a},
we introduced spurious correlations between the label $Y$ and the
provenance $Z$ with
a training correlation strength of $\log\alpha^{tr}=-0.6$
(\equationref{eq:alpha})
across all experimental settings\footnote{$P(Y=1 \mid Z=1):P(Y=1 \mid
Z=0) \approx 1:4$}.
To perturb the test distributions under provenance shift,
we generate a suite of test splits where $\log\alpha^{te}$ varies linearly from
$-1$ to $1$.
To isolate the performance impact of subgroup and label shifts,
we enforce the constraints $P^{tr}(Y) = P^{te}(Y)$ and $P^{tr}(Z) = P^{te}(Z)$.
We further ensure that $P(Y)$ and $P(Z)$ follow uniform distributions to ablate
the effects of class and provenance imbalance.
We denote the distribution with $\log\alpha^{te}=-0.6$ as the
in-distribution test set and $\log\alpha^{te}=0.6$ as the
out-of-distribution (OOD) test set, using performance in the OOD as
one measure of robustness to provenance shift.

\paragraph{Hyperparameter Selection}
Following the training protocol in \citet{gulrajaniSearchLostDomain2020}, we
conducted 16 random searches over the joint distribution of hyperparameters
for each algorithm and dataset to ensure a fair ``best-versus-best" comparison.
We list all hyperparameters, their default values, and the joint distribution
for random hyperparameter searching in \tableref{apd:tab:hyperparameters}.
We select the optimal hyperparameters based on the validation
worst-group accuracy (WGA), consistent with the methodology in
\citet{yangChangeHardCloser2023}.

\paragraph{Model Selection}
To estimate the stability and standard deviation of each algorithm, we fix the
optimal hyperparameters and rerun experiments across 5 distinct random seeds.
For each run, we select the model checkpoint that maximizes the worst-group
validation accuracy \citep{yangChangeHardCloser2023}.

\section{Results}
\label{section:results}
\subsection{Learning Dynamics in ERM}
We investigate the learning dynamics of ERM using optimal
hyperparameters to determine:
\begin{enumerate}
  \item Whether neural networks rely on shortcuts to artifacts
    related to provenance (or other confounding variables) during training?
  \item Whether a robust ERM solution can be identified via oracle
    selection on the OOD distribution?
\end{enumerate}
The training progress is standardized to a scale of $[0, 1]$ for all datasets.
\figureref{fig:wga_vs_steps} compares the in-distribution WGA (calculated at
$\log\alpha^{te}=-0.6$) (solid lines) and the OOD WGA (at
$\log\alpha^{te}=0.6$) (dashed lines) throughout the learning process
in \textbf{Source Datasets}.

\paragraph{Neural networks exploit extraneous artifacts for prediction.}
The generalization gap between in-distribution and OOD settings throughout
the learning progress suggests that the predictor $f$ leverages features
in $X$ that are independent of $Y$ but correlated with $Z$ to perform
predictions.
While this achieves high accuracy in-distribution, it results in
limited generalizability to OOD settings (i.e., $f(x) \not\perp Z \mid Y$).
This indicates a consistent shortcut learning pattern across
\textbf{Source Datasets}, characterized by high
in-distribution WGA at the beginning of training.
Conversely, OOD WGA grows slowly or even degrades as training
progresses, despite the fact that OOD settings share identical
marginal distributions $P(Y)$ and $P(Z)$ with the in-distribution settings.

\paragraph{Checkpoint selection does not confer robustness to provenance shift.}
Model selection (selecting an optimal checkpoint during the training
process) is a critical component of the learning pipeline
\citep{gulrajaniSearchLostDomain2020}.
Oracle selection—which assumes access to the test distribution—often
yields over-optimistic results and is generally not considered a
valid benchmarking methodology for real-world deployment.
\figureref{fig:wga_vs_steps} shows WGA (y-axis) as training proceeds (x-axis).
In this context, oracle selection would involve picking the training
step along the x-axis that corresponds to the best WGA.
As can be seen in \figureref{fig:wga_vs_steps}, our results highlight
that even with oracle selection, ERM fails to reach a satisfactory
solution under provenance shift.
The stagnant OOD WGA curves in this figure show that there is no
point in training at which ERM performance in the OOD setting
approaches its in-distribution performance.
This suggests that learning under provenance shift requires more than
just better checkpointing; it necessitates a deliberate
``deconfounding" process to prevent neural networks from using
confounders as indicators of the label.

\begin{figure}[htbp]
  \small
  \floatconts {fig:wga_vs_steps}
  {\caption{In-distribution (solid) and OOD (dashed) WGA for ERM in
      \textbf{Source Datasets}. Y-axis: Worst Group
      Accuracy (WGA). X-axis: normalized progress of training. The gap
      between the dashed and solid lines shows that models make
      inaccurate predictions in the OOD setting throughout their training
  process. }}
  {\includegraphics[width=\linewidth]{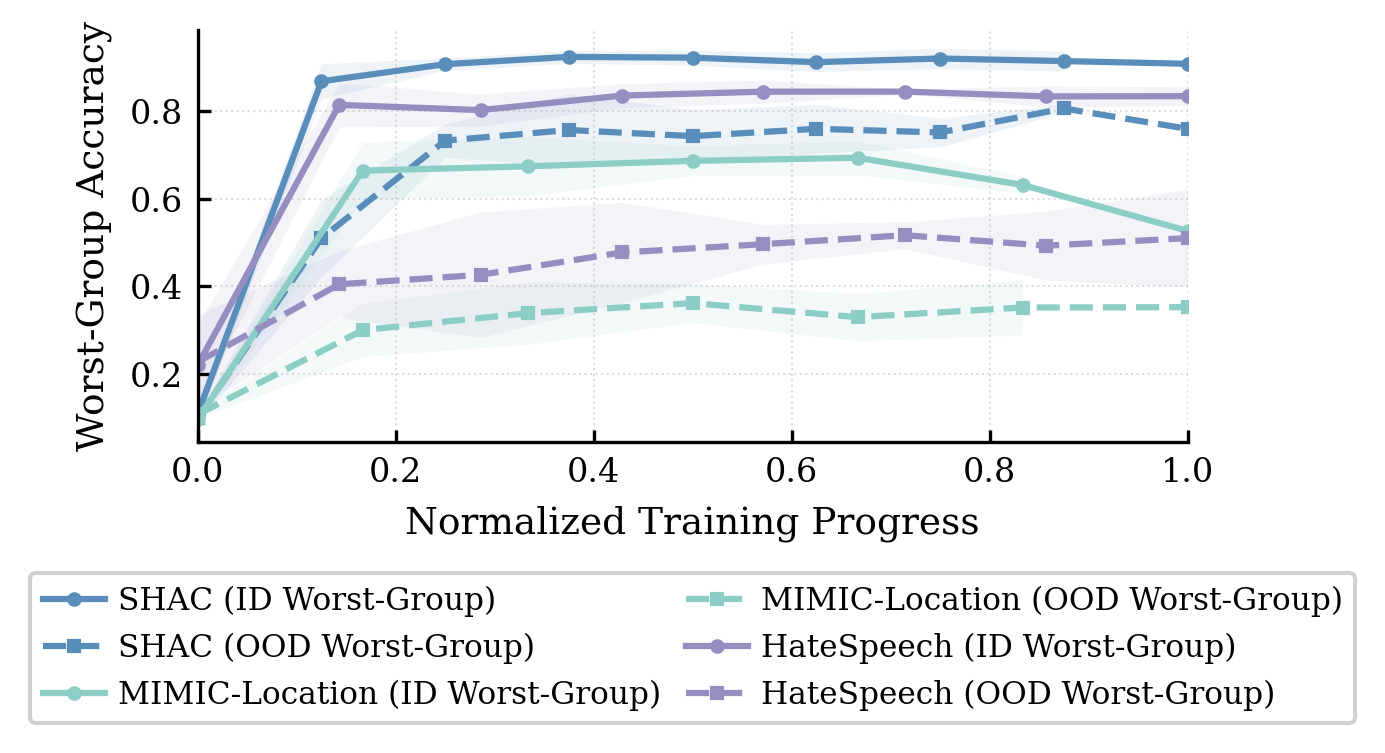}}
\end{figure}

\subsection{Predicting OOD Performance}
Estimating and understanding the predictor $f$ 's performance under
provenance shift is a problem of interest for practitioners.
The ``Accuracy-on-the-line” phenomenon, which has been previously
documented, describes the typical observation of a strong linear
correlation between a model's in-distribution and OOD accuracy
\citep{millerAccuracyLineStrong2021}.
While this is a useful heuristic for model selection (i.e., models
  with higher in-distribution accuracy are likely to have better OOD
accuracy), this phenomenon is not consistent in some OOD benchmarks
\citep{baekAgreementonthelinePredictingPerformance2022}, and has not
been explored in the context of provenance-specific label distribution shifts.
In this section, we examine
\begin{enumerate}
  \item Whether in-distribution WGA is a reliable indicator of OOD
    WGA under provenance shift?
  \item Whether $\alpha^{te}$ can serve as a robust alternative OOD
    performance indicator?
\end{enumerate}
\begin{figure*}[htbp]
  \floatconts
  {fig:wga_indicator}
  {\caption{WGA is not ``on the line" with respect to ID performance,
      but exhibits a
      strong linear relationship with the shift parameter $\alpha$ in
  \textbf{Source Datasets}.}}
  {\includegraphics[width=\textwidth]{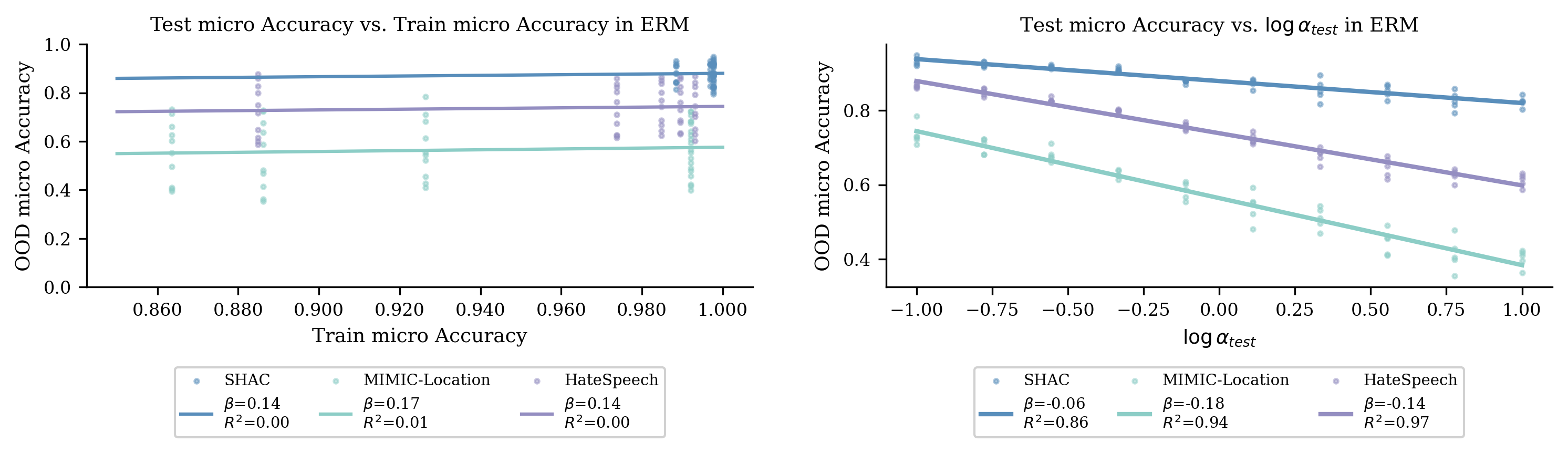}}
\end{figure*}
Using a fixed predictor $f$ per random seed, we calculated the OOD
WGA across a suite of test distributions where $\log\alpha^{te}$
varies linearly from $-1$ to $1$, while models are trained with
$\log\alpha^{tr}=-0.6$.
Results are shown in \figureref{fig:wga_indicator}, which plots the
OOD WGA (y-axis) against the training WGA (left panel) and $\alpha$
(right panel).
Lines have been fit to results for each model across multiple
stochastically-initiated runs of the experiment, each illustrated
with individual markers.
\paragraph{In-distribution WGA can underspecify OOD performance
under provenance shift, while $\alpha$ is a strong OOD performance indicator.}
Similarly to \citet{kohWILDSBenchmarkIntheWild2021}, we did not
observe a strong correlation between in-distribution and OOD WGA
across five runs with different random seeds
(\figureref{fig:wga_indicator}).
Instead, OOD performance varied significantly per run, as evidenced
by the vertical scattering of OOD WGA relative to train WGA.
This indicates that given a train WGA, the OOD WGA may not be
predictable using in-distribution WGA, and improving WGA in
distribution may not directly lead to improved performance in OOD
settings, which is consistent with results reported in
\citep{baekAgreementonthelinePredictingPerformance2022}.
Conversely, we found that $\alpha^{te}$ is strongly correlated with
OOD test WGA across \textbf{Source Datasets},
yielding high $R^2$ values of 0.86, 0.94, and 0.97.
Intuitively, since the predictor $f$ was trained on a distribution
with $\alpha^{tr} = -0.6$, test distributions with an $\alpha^{te}$
closer to $\alpha^{tr}$ generally exhibits higher performance.
These results suggest that practitioners can estimate deployment
robustness by performing a stress test to derive the linear
relationship (coefficient and intercept) between OOD WGA and $\alpha$.

\subsection{Benchmarking Existing Algorithms}
Invariant learning has garnered significant attention over the past decade;
however, its efficacy under provenance shift remains largely unexplored.
In this section, we address two primary questions:
\begin{enumerate}
  \item Can existing algorithms successfully account for an observed
    binary confounder to learn a counterfactually invariant predictor
    $f(X) \perp Z \mid Y$ in OOD settings?
  \item Do invariant learning algorithms remain robust under the
    provenance shift stress tests provided by \decondtntoolkit?
\end{enumerate}
To pursue answers to these questions, we evaluate a suite of
invariant learning algorithms across
\textbf{Source Datasets} in our toolkit.
Following the same protocol as our ERM experiments, we perform a
random search over the joint hyperparameter space for each algorithm
and dataset.
The optimal hyperparameter configurations are detailed in
\tableref{apd:tab:optimalhyper1} and \tableref{apd:tab:optimalhyper2}
for \textbf{Source Datasets}.
The full evaluation results are detailed in \appendixref{apd:results}.
\begin{table*}[h]
  \floatconts{tab:bench}
  {\caption{OOD WGA across algorithms in
  \textbf{Source Datasets}.}}
  {
    \begin{center}
      \begin{tabular}{lcccc}
        \toprule
        \textbf{Algorithm} & \textbf{SHAC} &
        \textbf{MIMIC-Location} &
        \textbf{HateSpeech}  & \textbf{Avg}  \\
        \midrule
        ERM & 0.83 $\pm$ 0.03 & 0.37 $\pm$ 0.06 & 0.51 $\pm$ 0.05 & 0.57 \\
        UpSampling & 0.87 $\pm$ 0.02 & 0.47 $\pm$ 0.04 & 0.58 $\pm$
        0.04 & 0.64 \\
        DownSampling & 0.87 $\pm$ 0.02 & 0.47 $\pm$ 0.05 & 0.61 $\pm$
        0.05 & 0.65 \\
        \midrule
        CAD & 0.85 $\pm$ 0.02 & 0.38 $\pm$ 0.03 & 0.49 $\pm$ 0.04 & 0.57 \\
        CDANN & 0.87 $\pm$ 0.01 & 0.37 $\pm$ 0.02 & 0.45 $\pm$ 0.03 & 0.56 \\
        CORAL & 0.85 $\pm$ 0.05 & 0.40 $\pm$ 0.04 & 0.49 $\pm$ 0.04 & 0.58 \\
        DANN & 0.85 $\pm$ 0.03 & 0.36 $\pm$ 0.03 & 0.52 $\pm$ 0.02 & 0.57 \\
        DFR & 0.85 $\pm$ 0.02 & 0.35 $\pm$ 0.09 & 0.58 $\pm$ 0.04 & 0.60 \\
        DualFilter & 0.82 $\pm$ 0.06 & 0.40 $\pm$ 0.05 & 0.51 $\pm$
        0.04 & 0.58 \\
        Fish & 0.84 $\pm$ 0.03 & 0.36 $\pm$ 0.04 & 0.48 $\pm$ 0.04 & 0.56 \\
        GroupDRO & 0.84 $\pm$ 0.03 & 0.36 $\pm$ 0.09 & 0.50 $\pm$ 0.05 & 0.57 \\
        IRM & 0.85 $\pm$ 0.02 & 0.40 $\pm$ 0.05 & 0.51 $\pm$ 0.03 & 0.59 \\
        JTT & 0.87 $\pm$ 0.01 & 0.38 $\pm$ 0.04 & 0.49 $\pm$ 0.06 & 0.58 \\
        LISA & 0.87 $\pm$ 0.04 & 0.42 $\pm$ 0.05 & 0.56 $\pm$ 0.03 & 0.62 \\
        LfF & 0.77 $\pm$ 0.08 & 0.45 $\pm$ 0.04 & 0.49 $\pm$ 0.06 & 0.57 \\
        MMD & 0.85 $\pm$ 0.02 & 0.38 $\pm$ 0.04 & 0.51 $\pm$ 0.04 & 0.58 \\
        MTL & 0.83 $\pm$ 0.04 & 0.37 $\pm$ 0.06 & 0.47 $\pm$ 0.05 & 0.56 \\
        Mixup & 0.84 $\pm$ 0.03 & 0.37 $\pm$ 0.04 & 0.52 $\pm$ 0.03 & 0.58 \\
        \bottomrule
      \end{tabular}
    \end{center}
  }
\end{table*}

\paragraph{Training distribution manipulation was the most effective
strategy to mitigate confounding under provenance shift.}
Our results indicate that Upsampling and Downsampling consistently improve
OOD WGA across \textbf{Source Datasets}, often
outperforming more complex invariant learning algorithms when
evaluated in a consistent setting
(\tableref{tab:bench}).
This aligns with previous findings by \citet{idrissi2022simple} and
\citet{hendrycksPretrainedTransformersImprove2020}.
Contrary to the prevailing expectation of improvement as data scales,
these results also show that discarding training samples
(\textbf{Downsampling}) can be more effective than
\textbf{Upsampling}, echoing the observations of
\citet{sagawaInvestigationWhyOverparameterization2020}.
Furthermore, algorithms incorporating balanced-sampling
strategies—such as DFR and LISA—consistently outperform those without them.
These observations suggest that while the underlying confounding structure
$X \leftarrow Z \rightarrow Y$ persists, explicitly removing the correlation
between $Z$ and $Y$ in the training distribution is a reliable path
to robustness.

\paragraph{Gains from invariant learning algorithms are dataset-dependent
and marginal compared to ERM.}
Overall, the evaluated algorithms achieve limited robustness to
provenance shift across \textbf{Source Datasets}.
While we observe some benefits from invariant learning methods
(bottom panel of Table \ref{tab:bench}), these improvements are
inconsistent across \textbf{Source Datasets} and
often marginal relative to baseline ERM.
This trend mirrors findings in domain generalization
\citep{gulrajaniSearchLostDomain2020} and subpopulation shift
\citep{yangChangeHardCloser2023}.
Consequently, developing methods to learn invariant features from
joint-imbalanced distributions remains a critical open research
direction under provenance shift.

\subsection{Generalization to \textbf{\textcolor{gray}{Attribute Datasets}}}
In this section, we examine how the above findings generalize to
other attributes in both clinical and non-clinical contexts under
``provenance'' shift.
We repeat all experiments on \textbf{\textcolor{gray}{Attribute Datasets}} using 
the identical pipeline to \textbf{Source Datasets}.
Overall, the characteristics of provenance shift empirically
generalize to other attributes (including sex, race, and genre) in
both clinical and non-clinical context.
We find that the generalization gap between in-distribution and OOD
settings throughout the learning progress persisted, and oracle
selection does not confer robustness to provenance shift in
\textbf{\textcolor{gray}{Attribute Datasets}}
(\figureref{fig:wga_vs_steps_app}).
In addition, in-distribution WGA can underspecify OOD performance
under provenance shift, while $\alpha$ is a strong OOD performance
indicator (\figureref{fig:wga_indicator_app}).
Upsampling and Downsampling consistently improve OOD WGA on
\textbf{\textcolor{gray}{Attribute Datasets}}, often outperforming
more complex invariant learning methods under a controlled evaluation setting.
LISA achieves comparable WGA to subsampling on
\textbf{\textcolor{gray}{Attribute Datasets}}, while other methods
yield only inconsistent and marginal gains over the ERM baseline
(\tableref{apd:tab:wga_app}).

\section{Discussion}
We study the problem of provenance shift and introduce \decondtntoolkit
specializing in the problem setting.
Our comprehensive empirical study using \decondtntoolkit reveals that:
\begin{enumerate}
  \item ERM consistently exploits provenance-related artifacts,
    leading to significant performance degradation in OOD settings.
  \item Oracle selection fails to confer robustness under provenance shift.
    While in-distribution WGA is an unreliable predictor of OOD
    success, the shift parameter $\alpha$ offers a strong linear
    indicator for performance estimation.
  \item Simple training distribution manipulations outperform complex
    invariant learning algorithms, highlighting the ongoing need for
    effective deconfounding methods.
\end{enumerate}
We provide several practical recommendations under provenance shift and
discuss several challenge for future research below:
\paragraph{Recommendation 1: Analyze the causal graph and tackle confounders
at the study design phase.}
Constructing Directed Acyclic Graphs (DAGs) and mitigating
confounding effects are standard practices in fields like
epidemiology, yet research on deliberate modeling of
confounders remains relatively limited within clinical NLP—particularly 
during data collection and ad-hoc modeling.
Consequently, clinical NLP datasets that explicitly account for
confounders are scarce, which impedes research on
confounder adjustment in this context.
Our work demonstrates the failure of ERM under provenance shift, emphasizes
that collecting and accounting for confounders is critical for
robustness in real-world deployment.
Given the current lack of effective post-hoc solutions for unobserved
confounders, we advocate that practitioners analyze the causal graph
during the study design phase, prior to data collection.
This practice can reduce the risk of collection bias, which induce
spurious correlations between the outcome of interest and unobserved
confounders.

\paragraph{Recommendation 2: Stress test the classifier under provenance
shift before online deployment.}
Our study indicates the train-time metrics including WGA may not
be strong predictors of OOD performance in a provenance shift
setting.
Consequently, we recommend that practitioners move beyond static
in-distribution evaluations and adopt a provenance-aware stress
testing protocol before deployment under the scenario of online
learning, in which practitioners need to decide whether to re-train
the model with the incoming online labeled samples.
Practically, practitioners can synthetically vary the provenance
shift parameter ($\alpha^{te}$) to map the model's performance decay
curve using \decondtntoolkit during the ad-hoc testing phase.
Deriving predictive coefficients can also allow for the estimation of
performance in real-time by simply monitoring the $\alpha$ of incoming data.
For example, in scenarios where online sample sizes are too small for
traditional subsampling, stress testing serves as a proxy for
empirical counterfactual invariance.
If a model exhibits a steep performance drop as $\alpha^{te}$
deviates from $\alpha^{tr}$, it indicates a high sensitivity to
observed confounders that ID metrics would otherwise fail to flag.
In this case, we would recommend exploring algorithms that are robust
provenance shift, such as Backdoor Adjustment
\citep{landeiro2016robust,landeiro2018robust,dingBackdoorAdjustmentConfounding2024}
and DAPPER \citep{dingTailoringTaskArithmetic2025}.

\paragraph{Challenge 1: Long-tailed categorical confounders demand a
large sample size to remove spurious correlations in observational data.}
We study a simplified scenario where there is only one observed binary
confounder.
In contrast, consider a categorical confounder $D$ with a set of categories
$D = \{d_1, d_2, \dots, d_k\}$, where the number of categories $k$ is large
(e.g., hundreds of rare comorbidities or specific medication types).
In clinical datasets, the distribution of $D$ is often highly skewed;
while a few categories $d_i$ are frequent, the majority reside in the
``long tail" with minimal representation.
With a such a ``curse of rarity" in categorical confounders, removing spurious
correlations between the outcome $Y$ and the confounder $D$ through
subsampling becomes exponentially more difficult and the effective
sample size is constrained by the rarest category necessary for the analysis.

\paragraph{Challenge 2: Removing confounding effects of continuous variables
is still a open question.}
The current landscape of deconfounding research focuses primarily on
categorical confounders and outcomes.
However, real-world confounders—such as pollution levels, smoking
intensity, or clinical care indices—are frequently continuous.
When dealing with such continuous confounders, some of the
aforementioned methods do not admit a natural extension.
There is a critical need for differentiable deconfounding objectives
that can integrate continuous variable adjustments directly into
neural network loss functions.
Furthermore, the field lacks robust benchmarks that move beyond
binary proxies to incorporate high-fidelity, continuous environmental
and physiological metadata for deconfounding research.

\section{Limitations}
First, we acknowledge that achieving and evaluating counterfactual
invariance are something of an ideal.
With text categorization in particular it may not be possible to
separate the features that indicate $Y$ from those associated with
$Z$ and quantify their separation using observational data.
Empirically, we observe geometric gaps between provenances in the
representation space, which mirrors phenomenons in multi-modal
research \citep{liangMindGapUnderstanding2022} and can not be
trivially eliminated by invariant learning algorithms such as MMD.
Nonetheless, to the extent it is attainable, models with
counterfactual invariance should be robust to provenance shift.
In future work we will incorporate additional methods motivated by
task arithmetic, which were designed to address confounding
adjustment directly \citep{dingTailoringTaskArithmetic2025}, and
quantify the extent of counterfactual invariance using geometric
separations in the representation space.

Second, we restrict our analysis to binary confounders $Z$ and labels $Y$.
Although this serves as an essential starting point, real-world
causal structures involve multi-class or continuous variables;
extending our framework to these richer settings is a key direction
for future work.
In addition, we only explored one direction of shift with a fixed
$\alpha^{tr}$, suggesting the need to explore shift directions for completeness.
The shift direction is fully reversed when the sign of $\alpha$
change, which indicates positive samples comes from one to another.
\citet{dingTailoringTaskArithmetic2025} showed that performance
patterns may vary with the shift direction.
On account of this possibility, we advise simulating shifts in both
directions for a complete characterization of robustness.
Despite the varied relationships between shift directions and
performance, we expect the linear relationship between $\alpha$ and
OOD performance holds when the shift direction is fully reversed
unless data from one source are more difficult to classify.
This is consistent with prior evidence: \citet{landeiro2016robust}
observed a symmetric performance degradation when the shift direction
is fully reversed in the general domain, as have
\citet{dingTailoringTaskArithmetic2025} with clinical data.

Third, our empirical findings are constrained by our controlled
experimental design.
To isolate the effects of provenance shift, we assume fixed marginals
$P^{tr}(Y) = P^{te}(Y)$ and $P^{tr}(Z) = P^{te}(Z)$ with uniform
distributions, which are necessary isolation for a first
characterization of the problem.
Furthermore, our observations are ``sparse", focusing primarily on
the association between $P(Y=0)$ and $P(Z=1)$; these results may not
generalize to other associations or high-dimensional settings.
As noted by \citet{gulrajaniSearchLostDomain2020}, such negative
claims are inherently restricted to the specific settings tested.

Finally, we adopt causal assumptions similar to
\citet{veitchCounterfactualInvarianceSpurious2021a}, requiring that
the invariant component $X^\perp_Z$ is statistically independent of
$Z$. 
In practice, a common cause of $X^\perp_Z$ and $Z$ could
introduce dependencies, potentially leading to an over-conservative
estimate of $\hat{X^\perp_Z}$ and the loss of predictive information.
And a common cause of $X^\perp_Z$ and $Y$ could
introduce dependencies, potentially leading to an over-optimistic
estimate of $\hat{X^\perp_Z}$ and the entanglement of provenance information.
In addition, unobserved confounders $U$ commonly pose a significant challenge
in medical research.
Here we discuss the scenario when $Y \leftarrow U \rightarrow Z$.
In the anti-causal causal graph when $X^\perp_Z \leftarrow Y
\leftarrow U \rightarrow Z$, the learning objective $f(X) \perp Z
\mid Y$ is optimal and risk-invariant, while in the causal graph when
$X^\perp_Z \rightarrow Y \leftarrow U \rightarrow Z$, the learning
objective should be $f(X) \perp Z$ \citep{schrouff2024mind}.
The consideration of $U$ falls outside the scope of the current work
($Y(z) \neq Y(z')$), but it represents an important direction for future work.

\section{Conclusion}
We establish a theoretical link between provenance shift,
counterfactual invariance, and invariant learning, deriving a
learning objective to achieve robustness under provenance shift.
To facilitate empirical study, we introduce \decondtntoolkit, a specialized 
and systematic suite to evaluate and enhance the robustness of text classification 
models to provenance shift.
Our work provides both the theoretical grounding and the practical tools necessary to 
characterize the problem of confounding by provenance.

\bibliography{CHIL2026_PMLR}

\clearpage
\appendix

\section{Proof}
\subsection{Proof of Lemma 2}
\label{apd:proof:lemma2}
\noindent \textbf{Lemma \ref{lemma:decomposition} (Prediction Decomposition).}\\
\textit{
  Let $X^\perp_Y$ be $Y$-invariant components of the input features, such that
  $X^\perp_Y(y) = X^\perp_Y(y')$.
  A discriminative model under the causal graph $\mathcal{G}: X
  \leftarrow Z \rightarrow Y$
  can be decomposed into two components: the inference and the
  provenance mechanism.
  \begin{equation}
    P(Y \mid X^\perp_Y)
    = \int
    \underbrace{P(Y \mid Z)}_{\substack{\text{provenance}}}
    \,
    \underbrace{P(Z \mid X^\perp_Y)}_{\substack{\text{inference}}}
    \, dZ
  \end{equation}
}
\begin{proof}
  By the law of total probability, we marginalize over the $Z$:
  \begin{equation*}
    P(Y \mid X^\perp_Y) = \int P(Y \mid Z, X^\perp_Y) P(Z \mid X^\perp_Y) \, dZ
  \end{equation*}
  By definition, $X^\perp_Y$ is $Y$-invariant, implying $P(Y \mid Z,
  X^\perp_Y) = P(Y \mid Z)$.
  Therefore, the conditional probability simplifies to
  $P(Y \mid Z, X^\perp_Y) = P(Y \mid Z)$.
  Substituting this back into the integral yields:
  \begin{align*}
    P(Y \mid X^\perp_Y) &= \int \underbrace{P(Y \mid
    Z)}_{\text{provenance}} \underbrace{P(Z \mid
    X^\perp_Y)}_{\text{inference}} \, dZ
  \end{align*}
\end{proof}

\subsection{Proof of Proposition 4}
\label{apd:proof:proposition4}
\noindent \textbf{Proposition \ref{proposition:robustness}
(Provenance Robustness).}\\
\textit{
  If a predictor $f: \mathcal{X} \rightarrow \mathcal{Y}$ satisfies
  counterfactual invariance such that
  $f(X(z)) = f(X(z'))$ for all $z, z' \in \mathcal{Z}$, then the
  predictor is robust to provenance shift.
  Specifically, the risk $\mathbb{E}[\mathcal{L}(f(X), Y)]$ remains
  constant under any intervention on
  the mechanism $P(Y|Z)$, provided $P(Y)$ remains invariant.
  This robustness holds under both:
  \begin{enumerate}
    \item Anti-causal settings ($Y \rightarrow X$);
    \item Causal settings ($X \rightarrow Y$): Provided that $Y \perp
      X \mid \{X^\perp_Z, Z\}$
      and the label satisfies counterfactual consistency, $Y(z) =
      Y(z')$ for all $z, z' \in \mathcal{Z}$.
  \end{enumerate}
}

\begin{proof}
  We marginalize the probability of a prediction $\hat{y}$ over the
  joint distribution of $Y$ and $Z$:
  \scriptsize
  \begin{align}
    P(f(X)=\hat{y}) &= \int_{\mathcal{Y}} \int_{\mathcal{Z}}
    P(f(X)=\hat{y} \mid Y, Z) P(Y, Z) \, dZ \, dY \\
    &= \int_{\mathcal{Y}} \int_{\mathcal{Z}} P(f(X)=\hat{y} \mid Y,
    Z) P(Y \mid Z) P(Z) \, dZ \, dY
  \end{align}
  \normalsize

  \begin{theorem}[\footnotesize Counterfactual Invariant Predictor\normalsize]
    (Restated from Thereom 3.2 in
    \citet{veitchCounterfactualInvarianceSpurious2021a})
    If $f$ is a counterfactual invariant predictor,
    \begin{enumerate}
      \item Under the anti-causal graph, then $f(X) \perp Z \mid Y$.
      \item Under the causal-direction graph, if $Y$ and $Z$ are not
        subject to selection
        (but possibly confounded), then $f(X) \perp Z$.
      \item Under the causal-direction graph, if $Y \perp X \mid
        \{X^\perp_Z, Z\}$ and
        $Y(z) = Y(z')$ for all $z, z' \in \mathcal{Z}$, then $f(X)
        \perp Z \mid Y$.
    \end{enumerate}
  \end{theorem}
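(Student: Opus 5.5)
The plan is to reduce every item to a statement about the $Z$-invariant part $X^\perp_Z$ of the input, and then read off independences from the causal graph by d-separation. \textbf{Step 1 (measurability).} Following \citet{veitchCounterfactualInvarianceSpurious2021a}, I would define $X^\perp_Z$ as the part of $X$ that is not a causal descendant of $Z$. Concretely, writing $X = h(Z, X^\perp_Z, \xi)$, it is the generated $\sigma$-algebra of functions $g$ of $X$ with $g(X(z)) = g(X(z'))$ for all $z,z'$. I would then show that $f$ is counterfactually invariant iff $f(X)$ is $X^\perp_Z$-measurable, i.e.\ $f(X) = \tilde f(X^\perp_Z)$ almost everywhere. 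One direction is immediate, since any function of $X^\perp_Z$ is unchanged by intervening on $Z$. For the other, note that $f(X(z))$ does not depend on $z$. It is therefore a function of the exogenous inputs of $X$ other than $Z$, and these are exactly what $X^\perp_Z$ collects. With this in hand, it suffices to prove each claimed independence with $X^\perp_Z$ in place of $f(X)$: independence of $X^\perp_Z$ (given $Y$, where relevant) from $Z$ passes to any measurable function of it.

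\textbf{Step 2 (anti-causal, item 1).} Here $Y \rightarrow X \leftarrow Z$, and $Y$ and $Z$ may be associated through a confounder $U$ or through selection on a node $S$ with $Y \rightarrow S \leftarrow Z$. Because $X^\perp_Z$ is not a descendant of $Z$, its only parents are $Y$ and noise independent of everything else. Every path from $X^\perp_Z$ to $Z$ therefore begins $X^\perp_Z \leftarrow Y \cdots$, with $Y$ a non-collider on it. Conditioning on $Y$ blocks all such paths, including the selection path $X^\perp_Z \leftarrow Y \rightarrow S \leftarrow Z$ when $S$ is conditioned on. Hence $X^\perp_Z \perp Z \mid Y$, and so $f(X) \perp Z \mid Y$.

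\textbf{Step 3 (causal direction, items 2 and 3).} Now $X^\perp_Z \rightarrow Y$ and $Z \rightarrow X$, with $Y$ and $Z$ possibly confounded.
\begin{enumerate}
  \item For item 2 (no selection), every path from $X^\perp_Z$ to $Z$ passes through $Y$ as a collider: either $X^\perp_Z \rightarrow Y \leftarrow Z$ or $X^\perp_Z \rightarrow Y \leftarrow U \rightarrow Z$. Since we do not condition on $Y$ and there is no selection, these paths are blocked, giving $X^\perp_Z \perp Z$ and hence $f(X) \perp Z$.
  \item For item 3, conditioning on $Y$ would open these collider paths, so the extra assumptions must remove them. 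The condition $Y(z) = Y(z')$ says $Z$ has no causal effect on $Y$, which deletes $Z \rightarrow Y$. The condition $Y \perp X \mid \{X^\perp_Z, Z\}$ says $Y$ depends on the input only through $X^\perp_Z$. Together they let me write $Y = g(X^\perp_Z, \eta)$ with exogenous noise $\eta$ independent of $(X^\perp_Z, Z)$. Since $X^\perp_Z$ is not a descendant of $Z$, we also have $X^\perp_Z \perp Z$. It follows that the pair $(X^\perp_Z, Y)$ is jointly independent of $Z$, which gives $X^\perp_Z \perp Z \mid Y$ and so $f(X) \perp Z \mid Y$.
\end{enumerate}

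\textbf{Main obstacle.} I expect two steps to need the most care. The first is Step 1, making ``$X^\perp_Z$'' rigorous as a $\sigma$-algebra and proving the measurability equivalence almost everywhere. The second is the last part of item 3, justifying $\eta \perp (X^\perp_Z, Z)$. There I would first try to argue that $Y(z) = Y(z')$ excludes any confounder $U$ acting on both $Y$ and $Z$. If this cannot be derived from the stated hypotheses, I would take it as the implicit structural assumption of Veitch et al.'s causal graph and state it explicitly.
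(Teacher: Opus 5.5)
The paper does not prove this statement; it imports it from Veitch et al. Your Step 1, your Step 2, and your treatment of item 2 follow the standard route behind that result: counterfactual invariance is equivalent to $X^\perp_Z$-measurability, and the independences are then read off the graphs. Those parts are sound. The gap is in item 3, and it has two layers.

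\textbf{Confounding.} $Y(z)=Y(z')$ only excludes directed causal paths from $Z$ to $Y$. It does not exclude a common cause $U$ of $Y$ and $Z$, so $\eta\perp(X^\perp_Z,Z)$ cannot be derived from the hypotheses. In fact the conclusion fails under confounding. Take $X^\perp_Z,U$ i.i.d.\ Bernoulli$(1/2)$, $Z=U$, $Y=\max(X^\perp_Z,U)$, $X=(X^\perp_Z,Z)$ and $f(X)=X^\perp_Z$. Then every hypothesis holds:
\begin{enumerate}
\item $f$ is counterfactually invariant;
\item $Y$ is a function of $(X^\perp_Z,Z)$, so $Y\perp X\mid\{X^\perp_Z,Z\}$ trivially;
\item intervening on $Z$ leaves $U$, and hence $Y$, unchanged.
\end{enumerate}
Yet $P(f(X)=1\mid Y=1,Z=1)=1/2$ while $P(f(X)=1\mid Y=1,Z=0)=1$. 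So your fallback of assuming no confounder is forced, not optional.

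\textbf{The case your argument actually covers.} Once confounding is excluded, your argument ends with $(X^\perp_Z,Y)\perp Z$. This implies $Y\perp Z$, i.e.\ $P(Y\mid Z)$ does not vary with $Z$. You have therefore proved item 3 only in the degenerate case with no provenance correlation at all, which is exactly where the result is not needed (the paper trains at $\log\alpha^{tr}=-0.6$). With no effect of $Z$ on $Y$ and no confounding, the only remaining source of a $Y$--$Z$ association in the causal-direction graph is selection $Y\rightarrow S\leftarrow Z$. You included selection in Step 2 but dropped it in Step 3. In the observed law $P(\cdot\mid S=1)$, your intermediate claim $X^\perp_Z\perp Z$ fails in general, because conditioning on $S$ opens the path $X^\perp_Z\rightarrow Y\rightarrow S\leftarrow Z$.

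\textbf{The missing step.} Run your argument in the pre-selection law, where it does give $p(x^\perp,y,z)=p(x^\perp,y)\,p(z)$. Then transport it through the selection:
\[
p(x^\perp,y,z\mid S=1)\;\propto\;p(x^\perp,y)\,p(z)\,P(S=1\mid y,z).
\]
The right-hand side is a function of $(x^\perp,y)$ times a function of $(y,z)$, so $X^\perp_Z\perp Z\mid Y$ holds in the training law. Equivalently, given $\{Y,S\}$, the non-collider $Y$ blocks the path $X^\perp_Z\rightarrow Y\rightarrow S\leftarrow Z$.

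This requires stating explicitly that selection depends on $(Y,Z)$ only and that there is no confounder. With those assumptions item 3 goes through; without them it is either false or vacuous.
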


  Applying the conditional independence property $f(X) \perp Z \mid
  Y$, which holds in the above two causal graphs,
  the term $P(f(X)=\hat{y} \mid Y, Z)$
  simplifies to $P(f(X)=\hat{y} \mid Y)$:
  \scriptsize
  \begin{align}
    P(f(X)=\hat{y}) &= \int_{\mathcal{Y}} P(f(X)=\hat{y} \mid Y)
    \left( \int_{\mathcal{Z}} P(Y \mid Z) P(Z) \, dZ \right) dY \\
    &= \int_{\mathcal{Y}} P(f(X)=\hat{y} \mid Y) P(Y) \, dY
  \end{align}
  \normalsize

  The final expression shows that the distribution of predictions
  (and consequently the risk) depends only on
  the model performance $P(\hat{Y} \mid Y)$ and the label prior $P(Y)$.
  Since the provenance mechanism $P(Y|Z)$
  has been marginalized out, the predictor is robust to shifts in
  that mechanism.
\end{proof}

\section{Related Work}
\label{apd:relatedwork} In this section, we provide an exhaustive
literature review
on possible solutions to provenance shift from different perspectives.

\subsection{Distribution Alignment}
\label{apd:distributionalignment}
Distribution alignment techniques align high-level statistics (e.g., mean) of
distance measurement on features across provenances by minimizing their
differences.
For example, the Maximum Mean Discrepancy (MMD) measures and then minimizes
the probability divergence between two distributions by mapping samples to a
reproducing kernel Hilbert space (RKHS, e.g., using Gaussian kernels) and then
deriving the difference of distribution across provenances
\citep{liDomainGeneralizationAdversarial2018}.
Similarly, \citet{sunDeepCORALCorrelation2016} proposed deep CORAL, which
aligns the mean and the variance of the features instead of using parametric
Gaussian kernels as in MMD. From a causal perspective,
\citet{veitchCounterfactualInvarianceSpurious2021a}
theoretically proves that domain alignment should consider the causal direction
between the data and label. Specifically, when modeling causal
direction data (i.e.,
the data causes the label), features from different provenances should be
aligned like the classical practice.
However, when modeling anti-casual direction data (i.e., the label
causes the data),
features from different provenances should be aligned \emph{conditioned on the
label}\footnote{Here, we use the MMD, denoted as $MMD$, for alignment
  as an example.
  Formally, if $X\rightarrow Y$, then the training objective is
  $\argmin_{X}\mathrm{MMD}(X)$. If $X\leftarrow Y$, then the training
  objective is
$\argmin_{X}\mathrm{MMD}(X \mid Z)$.}

\subsection{Adversarial Training}
\label{apd:adversarialtraining}
Compared to distribution alignment techniques, which make
low-dimensional statistics
indiscriminate to provenance, adversarial training techniques make the full
feature space indiscriminate to provenance using a parametric provenance
discriminator (i.e., predictor) based on neural networks.
Adversarial training was initially proposed to train a generative model, which
generates photorealistic images using random noise
\citep{goodfellow2014generative}.
Specifically, adversarial training resembles a two-player game, which trains a
\emph{discriminator} to distinguish between real and the generated fake images
by minimizing the real-fake classification loss, while rewarding the
\emph{generator}
to fool the discriminator by maximizing the real-fake binary
classification loss.
With this training paradigm, the generator will finally generate photorealistic
images that can fool a strong real-fake image discriminator.
\citet{ganinDomainAdversarialTrainingNeural2016} extends this idea to
domain adversarial
training by learning domain-agnostic features that can confuse a
domain discriminator.
Specifically, \citet{ganinDomainAdversarialTrainingNeural2016} proposed the
Domain Adversarial Neural Network to learn features discriminative for the
main learning task but indiscriminate with respect to provenances.
\citet{liDeepDomainGeneralization2018} proposed Conditional DANN (CDANN) to
adapt to concept shift by making the domain discriminator predict the
permutation of provenance and label. In this case, the domain discriminator
can't tell the features from different provenances but with the same label
apart.
\citet{ruanOptimalRepresentationsCovariate2021} extends the domain adversarial
training objective and proposes the Contrastive Adversarial Domain (CAD)
objective by explicitly considering the discriminative capability of features.
Specifically, the optimal features should remain discriminative for the
learning task while distributions of features are indiscriminative to
provenances in the training objective.

Similarly, adversarial training on word embedding makes the word
embedding indiscriminative
to the provenance in a post-hoc way.
For example, Iterative Null-space Projection (INLP) iteratively
removes an attribute
from the word embedding space by projecting it to an attribute-agnostic subspace
\citep{ravfogelNullItOut2020}.
Specifically, it first trains a linear provenance discriminator and then
projects the feature on the null-space of the provenance discriminator, which is
expected to be non-discriminative of the provenance.

Though adversarial training is a strong method to make the feature
provenance-agnostic,
it can increase the reliance on spurious features instead of core features in
a neural network with regulation (e.g. $\mathcal{l}_{2}$ regulation)
\citep{moayeriExplicitTradeoffsAdversarial2022}.
Besides, adversarial training can have an unintended consequence in reducing
robustness to distribution shift, specifically when spurious
correlations are changed
in the test distribution.
\citet{kumarProbingClassifiersAre2022} argue that both adversarial training and
word embedding editing methods can be counter-productive in real-world settings
where the label is naturally correlated with the provenance, because they
internally use an auxiliary (or probing) provenance classifier based on the
features learnt by the main-task classifier.
Specifically, the provenance classifier cannot be a reliable signal on whether
the feature is causally derived from the provenance.

\subsection{Invariant Learning}
\label{apd:invarianttraining}
The Invariant Risk Minimization (IRM) objective is a variant of the ERM, which
finds the features such that the optimal main-task classifier based
on these features
is \textbf{simultaneously} optimal for all provenances
To overcome the challenging and bi-level optimization problem,
\citet{arjovskyInvariantRiskMinimization2020}
proposed a practical version, IRMv1, which penalizes the risk variation across
provenances using one single scalable parameter.
\citet{drankerIRMWhenIt2021} conducts a case study in natural language
inference and finds that learning complex features in place of spurious
correlation proves to be difficult in the wild, leading to little
incremental over
ERM.
In addition, the performance of IRM depends on the sample size, the prevalence
of spurious correlation, and the strength of spurious correlation, therefore
limiting IRM's advantage in the wild.
\citet{ahujaInvariancePrincipleMeets2021} adds information bottleneck
constraints \citep{tishby2000information} to the IRM to mitigate failures when
invariant features capture all the information about the label (e.g., the
label is a deterministic function of the invariant feature).
An empirical study shows that the Information Bottleneck-based IRM achieves
consistent calibration across provenances and information compression
techniques (e.g., information bottleneck) are potentially effective in
achieving invariance \citep{yoshidaUnderstandingVariantsInvariant2024}.

\subsection{Meta Learning}
Unlike conventional machine learning approaches that use a \emph{fixed}
learning algorithm to solve the question from scratch, meta learning, known as
learning-to-learning, aims to improve the learning algorithm itself
using experience
of multiple learning episodes \citep{hospedalesMetaLearningNeuralNetworks2022}.
The motivation behind meta learning for distribution shift is that
exposing a model
to distribution shift can make it learn to adapt to the shifted domain.
For example, the Meta-Learning Domain Generalization approach (MLDG) simulates
distribution shift by synthetically perturbing the pseudo testing distribution
in each training step \citep{liLearningGeneralizeMetaLearning2018}.
The meta-optimization objective is that the performance on the
training distribution
should also improve with the pseudo testing distribution simultaneously.

\subsection{Gradient Matching}
\label{apd:gradientmatching}
Gradients are signals that control learning orientation in neural
network training.
The idea of the gradient matching method under provenance shift is to align
the provenance-specific gradient directions to derive provenance-invariant
optimization paths and features.
\citet{shiGradientMatchingDomain2021} propose an Inter-Domain Gradient
Matching (IDGM) objective that maximizes the inner product between gradients
of different provenance.
To overcome the computational cost of the second-order derivatives,
\citet{shiGradientMatchingDomain2021}
propose Fish, a meta-learning optimization method based on simply first-order
derivatives, to optimize the IDGM objective.
\citet{rameFishrInvariantGradient2022} extends the IDGM objective to the Fishr
regularization, in which they match the provenance-specific gradient variances
to match provenance-level risks and Hessians.

\subsection{Distributionally Robust Optimization}
\label{apd:dro}
Group Distributionally Robust Optimization (Group DRO) is a principle that
minimizes the \emph{worst-case} over potential test distributions training error
instead of the \emph{average} training error in classical ERM
\citep{ben-talRobustSolutionsOptimization2013}.
Overparameterization refers to increasing model size beyond the point of zero
training error, which can exacerbate spurious correlations when they are present
in the training data \citep{sagawaInvestigationWhyOverparameterization2020}.
\citet{sagawa*DistributionallyRobustNeural2019} study group DRO in the context
of overparameterized neural networks, which achieve zero training
error but don't
generalize to the worst group at testing time.
They found that strongly-regularized group DRO models without vanishing
training signals have good worst-case performance.
Besides, \citet{sagawa*DistributionallyRobustNeural2019} developed a stochastic,
online, and greedy algorithm for group DRO optimization that can scale to large
models and datasets.
\citet{zhouExaminingCombatingSpurious2021} propose Group Conditional
DRO (GC-DRO)
to introduce group uncertainty in the case when pre-defined group
information does
not directly account for various spurious correlations.
Specifically, GC-DRO introduces group-level and group-conditioned sample-level
weights in the training process to generate a more flexible
uncertainty set compared
to group DRO, which treats each group as a unit and thus removes sample-level
weights \emph{within each group}.
Inspired by the Group DRO, \citet{eastwoodProbableDomainGeneralization2022}
propose Quantile Risk Minimization (QRM) objective, which seek predictors that
perform well with a specific probability $\alpha$.
In other words, while ERM seeks predictors that perform well on the
\emph{average-case}
and Group DRO seeks predictors that perform well on the \emph{worst-case}, QRM
seeks \emph{probabilistic} predictors that perform well with
probability $\alpha$
by minimizing the $\alpha$-quantile of the estimated risk
distribution over training
domains.
Specifically, \citet{eastwoodProbableDomainGeneralization2022} proposed the
Empirical QRM (EQRM), which leverages kernel density estimation (KDE,
\citet{parzenEstimationProbabilityDensity1962})
to estimate the cumulative distribution function (CDF) of the training risk and
minimize the probability at the $\alpha$-quantile of the CDF.

\subsection{Double-Stage Training}
\label{apd:doublestage}
Double-phase training strategies add a fine-tuning step after the standard ERM
training.
Just Train Twice (JTT) retrains the neural network using a reweighted dataset
(second stage), in which samples misclassified at the end of a few steps of the
standard ERM (first stage) are upweighted \citep{liuJustTrainTwice2021}.
Different from most mentioned methods, JTT does not require group information
during training time.
Intuitively, JTT improved the worst-group performance by focusing on samples
from groups where standard ERM models perform poorly.
\cite{kirichenkoLastLayerReTraining2023} observed that ERM can learn
core features
even when spurious correlations are present and are much simpler than the core
features in the training data.
In addition, they noticed the final classification layer of the model
highly weights
the spurious features, resulting in poor predictions on the minority groups.
Therefore, \cite{kirichenkoLastLayerReTraining2023} proposed Deep Feature
Reweighting (DFR), which leveraged a small set of data where the
spurious correlation
does not hold to retrain the classifier exclusively after the
standard ERM training.
Following DFR, \citet{chenProjectProbeSampleEfficient2023} projects the
features in an orthogonal space and then trains the domain-specific
classifiers using small data sets from the target domain.
This approach is theoretically sample-efficient in training the domain-specific
classifier with minimal distribution assumptions.

\subsection{Re-Sampling and Re-Weighting}
\label{apd:sampling}
Re-sampling and re-weighting strategies manipulate the training distribution to
prevent the model from learning from spurious correlation.
Re-sampling simply down-samples and re-weighting simply up-weights
the sample according to the number of samples per class or group
\citep{japkowicz2000class}.
In other words, re-sample throws away data points until the classes
or groups are balanced in size, followed by ERM on the down-sampled dataset.
While discarding data opposes common wisdom in learning theory, where
the expected error is inversely proportional to the sample size,
\citet{chaudhuriWhyDoesThrowing2023} theoretically shows that minor
groups (i.e., the tail of the data distribution) play an important
role in determining the worst-group-accuracy of a classifier that
classifies samples with the largest possible gap between the labels.
In addition, resampling outperforms ERM in worst-group-error when learning
from imbalanced classes with tails and balanced classes but imbalanced groups.
In the previous work, \citet{gulrajaniSearchLostDomain2020}
implements a group-balanced ERM and found that it outperforms the
state-of-the-art in terms of average performance across datasets, and
the improvement of existing algorithms is too incremental compared to
the group-balanced ERM.
This finding is consolidated by
\citet{hendrycksPretrainedTransformersImprove2020}, who found that if
the samples of different datasets were unbiasedly drawn from the same
distribution, the model should not discover any dataset-specific patterns.
Similarly, \citet{sagawaInvestigationWhyOverparameterization2020}
finds that subsampling the majority group can empirically achieve low
minority error in the overparameterized regime, but upweighting the
minority group fails.
\citet{idrissiSimpleDataBalancing2022} extends the work by considering both
class-balance and group-balance and finds that re-sampling and re-weighting
are competitive in common benchmarks in which classes or groups are imbalanced.
Re-sampling and re-weighting are advanced and are recommended in the
wild as they are faster to train and are hyper-parameter-free.
\citet{cohen-wangAskYourDistribution2024} study pre-trained models
and find that fine-tuning on a small but balanced dataset can result
in significantly more robust models than fine-tuning on a large but
imbalanced dataset.

\subsection{Domain Interpolation}
\label{apd:domaininterpolation}
Mixup linearly combines, or \emph{interpolates}, two random samples and
applies the same interpolation strategy on the corresponding labels to create
neighborhood (or \emph{vicinity}) samples, thereby overcoming the limitations
of dataset-dependent and intra-label data augmentation
\citep{zhangMixupEmpiricalRisk2018a}.
\citet{yanImproveUnsupervisedDomain2020} extended this idea to
inter-domain Mixup,
which linearly combines two random samples from different domains by
explicitly considering the sample provenance.
In a case study of natural language understanding tasks using
inter-domain Mixup,
this method was empirically demonstrated to improve the generalizability of
minor groups across encoder, encoder-decoder, and decoder-only architectures
consistently \citep{korakakisMitigatingShortcutLearning2025}.
Following inter-domain Mixup, Learning Invariant Predictors with
Selective Augmentation
(LISA) combines inter-label Mixup and inter-domain Mixup by randomly selecting
one strategy for augmentation to introduce cross-label augmentation
to inter-domain
Mixup \citep{yaoImprovingOutofDistributionRobustness2022}.
Although empirical results of LISA have shown remarkable improvements in several
benchmarks, \citet{teneySelectiveMixupHelps2024} pointed out that the
performance
gain of LISA might originate from implicitly balancing the training distribution
instead of interpolated mixing.
For example, intra-label Mixup implicitly resamples the training data
to a class-uniform
distribution, which is a strategy in tackling label shift.

\section{Algorithms}
\label{apd:algorithms}

In this section, we denote $\mathcal{X}$ as the input space,
$\mathcal{Y}$ as the label space,
and $\mathcal{H}$ as the representation space.
Let $\Phi: \mathcal{X} \rightarrow \mathcal{H}$ represent the featurizer and
$w: \mathcal{H} \rightarrow \mathcal{Y}$ the classifier,
such that $f = w \circ \Phi$ defines the composite predictor.
The loss function is denoted by $\mathcal{L}: \mathcal{Y} \times
\mathcal{Y} \rightarrow \mathbb{R}_{\geq 0}$.

\paragraph{ERM}
ERM serves as the standard baseline for supervised learning.
Given a dataset $\mathcal{D} = \{(x_i, y_i)\}_{i=1}^n$, the learning
objective is to find
a predictor $f \in \mathcal{F}$ that minimizes the empirical risk:
\begin{equation}
  \min_{f} \mathbb{E}_{(x,y) \sim P^{tr}} [\mathcal{L}(f(x), y)]
\end{equation}

\paragraph{UpSampling and DownSampling \citep{japkowicz2000class}}
UpSampling and DownSampling share the ERM objective but modify the
training distribution $P^{tr}$ to produce a
balanced joint distribution $\hat{P}(X, Y, Z)$.
Specifically, the target distribution is adjusted such that:
\begin{equation}
  \hat{P}(X,Y,Z) = P(X,Y,Z) \frac{P(Y)P(Z)}{P(Y,Z)}
\end{equation}
This adjustment aims to decouple the dependence between labels $Y$
and confounder $Z$.
A detailed discussion of specific sampling implementations is
provided in Appendix \ref{apd:sampling}.
\paragraph{DANN and CDANN
  \citep{ganinDomainAdversarialTrainingNeural2016,
liDeepDomainGeneralization2018}}
DANN introduce a domain discriminator $f_{disc}: \mathcal{H}
\rightarrow \mathcal{Z}$ to ensure the representations
are uninformative of the domain $z \in \mathcal{Z}$.
The objective is:
\begin{equation}
  \begin{aligned}
    \min_{w, \Phi}& \max_{f_{disc}} \mathbb{E}_{(x,y) \sim P^{tr}}
    [\mathcal{L}(w(\Phi(x)), y)]\\
    &- \alpha \mathbb{E}_{(x,z) \sim P^{tr}}
    [\mathcal{L}_{disc}(f_{disc}(\Phi(x)), z)]
  \end{aligned}
\end{equation}

where $\mathcal{L}_{disc}$ is the binary cross-entropy loss for
domain classification
and $\alpha$ is a trade-off hyperparameter.

CDANN conditions the domain discriminator by adding a label-specific
embedding to the
provenance discriminator input.
Denote $\mathbf{e}_y$ as the embedding vector associated with label $y$.
The objective of CDANN is:
\begin{equation}
  \begin{aligned}
    \min_{w, \Phi}& \max_{f_{disc}} \mathbb{E}_{(x,y) \sim P^{tr}}
    [\mathcal{L}(w(\Phi(x)), y)]\\
    &- \alpha \mathbb{E}_{(x,y,z) \sim P^{tr}}
    [\mathcal{L}_{disc}(f_{disc}(\Phi(x), \mathbf{e}_y), z)]
  \end{aligned}
\end{equation}

A discussion of adversarial techniques is provided in Appendix
\ref{apd:adversarialtraining}.

\paragraph{MMD and CORAL
\citep{sunDeepCORALCorrelation2016,liDomainGeneralizationAdversarial2018}}
CORAL and MMD penalize the distribution distances across provenances.
CORAL minimizes the distance between the second-order statistics of
the representations across provenances.
Let $C$ be the covariance matrices of the source and target features
in $\mathcal{H}$.
The CORAL penalty is defined using the squared Frobenius norm:
\begin{equation}
  \mathcal{L}_{\mathrm{CORAL}}=\sum_{i,j \in \mathcal{Z}}
  \frac{1}{4d^2} \| C_i - C_j \|_F^2
\end{equation}
where $d$ is the dimension of the representations.

MMD is a non-parametric metric that measures the discrepancy between
two distributions $P_s$ and $P_t$ by
comparing their mean embeddings in an RKHS $\mathcal{F}$ associated
with a kernel $k(\cdot, \cdot)$.
The squared MMD distance is:
\begin{equation}
  \mathcal{L}_{MMD} = \sum_{i,j \in \mathcal{Z}}\left\| \mathbb{E}_{x
  \sim P_i}[\phi(x)] - \mathbb{E}_{u \sim P_j}[\phi(u)] \right\|_{\mathcal{F}}^2
\end{equation}
A discussion of domain alignment techniques is provided in Appendix
\ref{apd:distributionalignment}.
\paragraph{CAD \citep{ruanOptimalRepresentationsCovariate2021}}
The CAD learns to keep the representation $H$ discriminative for the
learning task and maintain the support of
its marginal distribution invariant to shifts.
Denote $A$ is a random variable sampled conditionally from $X$.
CAD instead maximizes the mutual information $I[A;\Phi(X)]$ based on
InfoNCE \citep{oord2018representation} as an
alternative learning objective.
In addition, CAD introduces a domain bottleneck $I[\Phi(X);Z]$, which
enforces support match using a KL divergence.
\paragraph{Mixup and LISA \citep{zhangMixupEmpiricalRisk2018a,
yaoImprovingOutofDistributionRobustness2022}}
Mixup linearly interpolates one sample $(x_i, y_i)$ with another
random sample $(x_j, y_j)$ from the training data:
\begin{equation}
  \min_{f} \mathbb{E}_{(x,y) \sim P^{tr}} \mathcal{L}(f(\hat{x}), \hat{y})
\end{equation}
where $\hat{x}=\lambda x_i + (1-\lambda)x_j$, $\hat{y}=\lambda
\hat{y}_i + (1-\lambda)\hat{x}_j$, and $\lambda \sim
\text{Beta}(\alpha, \alpha)$, for $\alpha \in (0, \infty)$.

LISA explicitly consider label and provenance in the Mixup process by
mixing samples (1) with the same provenance but
different labels (i.e., intra-domain $y_i \neq y_j$ and $z_i = z_j$)
and (2) with the same provenance but
different labels (i.e., intra-label $y_i = y_j$ and $z_i \neq z_j$).
A discussion of domain interpolation techniques is provided in
\appendixref{apd:domaininterpolation}.

\paragraph{IRM \citep{arjovskyInvariantRiskMinimization2020}}
IRM penalizes feature distributions that have different optimal
linear classifiers for each domain.
The IRM objective is
\begin{equation}
  \begin{aligned}
    \min_{\substack{\Phi: \mathcal{X} \rightarrow \mathcal{H} \\ w:
    \mathcal{H} \rightarrow \mathcal{Y}}}
    &\sum_{z \in \mathcal{Z}}\mathbb{E}_{(x,y) \sim P_z}
    [\mathcal{L}(w \circ \Phi(x), y)] \\
    \text{s.t.} \
    \min_{\bar{w}:\mathcal{H} \rightarrow \mathcal{Y}}
    &\mathbb{E}_{(x,y) \sim P_z} [\mathcal{L}(\bar{w} \circ \Phi(x), y)],
    \forall z \in \mathcal{Z} .
  \end{aligned}
\end{equation}

A discussion of invariant learning techniques is provided in
\appendixref{apd:invarianttraining}.

\paragraph{GroupDRO \citep{sagawa*DistributionallyRobustNeural2019}}
GroupDRO uses distributionally robust optimization to explicitly
minimize the loss
on the worst-case domain during training:
\begin{equation}
  \min_{f} \max_{z \in \mathcal{Z}} \mathbb{E}_{(x,y) \sim P^{tr}_z}
  [\mathcal{L}(f(x), y)]
\end{equation}
In practice, this is implemented using an online estimation of group
weights $q$,
where $q$ is updated via exponentiated gradients to shift the model's
focus toward groups with higher training error.
A discussion of distributionally robust techniques is provided in
\appendixref{apd:dro}.

\paragraph{Fish \citep{shiGradientMatchingDomain2021}}
Fish proposed an inter-domain gradient matching objective to align
the gradient direction across provenances.
To reduce the computing complexity of second-order derivatives, Fish
use a first-order algorithm for approximation.
Similar to meta learning, Fish updates a clone $\hat{f}$ using data
per provenance and then update using a
weighted difference bwteen the clone model $\hat{f}$ and the modek
before update $f$.
A discussion of gradient matching techniques is provided in
\appendixref{apd:gradientmatching}.

\paragraph{MTL \citep{blanchard2021domain}}
MTL augments the representation with the marginal distribution of
feature vectors.
Specifically, MTL maintains a domain embedding $\mathbf{e}$, which is
the empirical mean of
the representations in that domain:
\begin{equation}
  \mathbf{e}_z = \mathbb{E}_{x \sim P^{tr}_z} [\Phi(x)]
\end{equation}
In practice, $\mathbf{e}_z$ is updated via an exponential moving
average (EMA) during training.
The MTL learning objective is based on the concatenation of the
point-wise feature with the domain embedding:
\begin{equation}
  \min_{f}\mathbb{E}_{(x,y) \sim P^{tr}_z}
  [\mathcal{L}(w\left(\left\|\Phi(x); \mathbf{e}_z\right\|_2\right), y)]
\end{equation}
where $[\cdot ; \cdot]$ denotes concatenation.
In inference, $\mathbf{e}_z$ degrades to the mean of $\Phi(x)$.
This allows the classifier to adapt its decision boundary based on
the global statistics of the
current marginal distribution.

\paragraph{LfF \citep{nam2020learning}}
LfF simultaneously trains a biased classifier $f_{B}$ and a debiased
classifier $f_{D}$.
The biased model is optimized by $\mathcal{L}_{\mathrm{GCE}}$ to amplify bias:
\begin{equation}
  \mathcal{L}_{\mathrm{GCE}} = \mathbb{E}_{(x,y) \sim P^{tr}}
  \frac{1-P\left( f(x) = y \right)^q}{q}
\end{equation}
where $q\in (0, 1]$ is a hyperparameter that controls the amplification degree.
The debiased model is optimized by
\small
\begin{equation}
  \mathcal{L}_{\mathrm{LfF}} = \mathbb{E}_{(x,y) \sim P^{tr}}
  \frac{\mathcal{L}(f_{B}(x), y)}{\mathcal{L}(f_{B}(x), y) +
  \mathcal{L}(f_{D}(x), y)}\mathcal{L}(f_{D}(x), y)
\end{equation}
\normalsize

\paragraph{JTT \citep{liuJustTrainTwice2021}}
JTT first curates an error set $E$ of training samples that the ERM
model $f_{\mathrm{ERM}}$ misclassifies:
\begin{equation}
  E = \left\{ (x_i, y_i) \;\text{s.t.}\; f_{\mathrm{ERM}}(x_i) \neq y_i \right\}
\end{equation}
Then JTT upweight the samples in $E$:
\begin{equation}
  \min_{f} \lambda \sum_{(x,y) \in E} \mathcal{L}(f(x), y)
  + \sum_{(x,y) \notin E} \mathcal{L}(f(x), y)
\end{equation}
where $\lambda$ is a hyperparameter.

\paragraph{DFR \citep{kirichenkoLastLayerReTraining2023}}
DFR adapts the standardized ERM training procedure.
It then freezes the featurizer $\Phi$ and and retrains the classifier
$w$ with $l_2$ penalization using a subset from $\mathcal{D}^{tr}$.
The subset is sampled to follow a
joint balanced distribution $\hat{P^{tr}}$.

\paragraph{BackDoor \citep{dingBackdoorAdjustmentConfounding2024}}
BackDoor includes a provenance embedding in the model to conduct
backdoor adjustment.
The prediction can be decomposed as
\begin{equation}
  P(Y \mid X) = \sum_{z \in \mathcal{Z}} P(Y \mid X,z)P(z)
\end{equation}
Backdoor uses $P^{tr}(Z)$ as $P(Z)$ in inference.

\paragraph{DualFilter \citep{shengMitigatingConfoundingSpeechBased2025}}
DualFilter first trains a task classifier from the pretrained weights
$\theta_0$, obtaining the
accumulative weight changes $\Delta_{task}$ across the network and
finetuned weights $\hat \theta$; and then it
trains a provenance classifier $g$ from the same pretrained
checkpoint with accumulative weight change
$\Delta_{prov}$.
For both $\Delta$, pick the top $k$ most changed weights locations
for set operation and masking.
\begin{equation}
  \begin{aligned}
    &\text{Let } M = \Delta_{task, k} \odot \Delta_{prov, k} \\
    &f_{\theta'}: \theta_i' \leftarrow \hat \theta_i = 0 \quad \forall i \in M,
  \end{aligned}
\end{equation}
where $\odot$ is an arbitrary set operation and $f_{\theta'}$ is the
masked task model ready for inference.

\section{Datasets}
\label{apd:datasets}
\paragraph{\textbf{SHAC} \citep{lybarger2021annotating}}
The Social History Annotation Corpus (SHAC) consists of clinical
notes from two institutions:
the University of Washington Medical Center and MIMIC-III.
The primary task is to identify information regarding substance use
from clinical text.
In our experiments, we define the prediction label as the presence of
drug abuse and
utilize the data source (institution) as the provenance.
\paragraph{\textbf{MIMIC-Location} \citep{johnson2016mimic}}
\textbf{MIMIC-Location} contains clinical notes
recorded during
the first 48 hours of a hospital stay, sourced from the MIMIC-III database.
We define the prediction target as in-hospital mortality and use
admission location
(Emergency Room Admission vs. Physician Referral / Normal Delivery)
as the provenance.

\paragraph{\textbf{HateSpeech} \citep{vidgen2021learning,
de2018hate}}
We utilize a hate speech detection dataset curated by
\citet{dingTailoringTaskArithmetic2025}, which aggregates samples
from two distinct sources: synthetically generated text and posts
from a white supremacist forum.
We treat toxicity detection as the prediction task and use the data
source as the provenance

\paragraph{\textbf{\textcolor{gray}{Civilcomments}} \citep{borkan2019nuanced}}
\textbf{\textcolor{gray}{Civilcomments}} is a comment collection from online articles.
identities that are mentioned in the comment
We follow the same preprocessing procedure in \textsc{WILDS}
\citep{kohWILDSBenchmarkIntheWild2021}.
We use the mention of demographics as a proxy to stereotyping on
certain attribute
which is a common cause of the toxicity and the comment.
We select the mention of black as the provenance.

\paragraph{\textbf{\textcolor{gray}{MultiNLI}} \citep{williams2018broad}}
\textbf{\textcolor{gray}{MultiNLI}} is a Natural Language Inference
(NLI) dataset designed to predict the logical relationship
(entailment, neutral, or contradiction) between a premise and a hypothesis.
Using the training subset, we binarize the task to predict
non-entailment (grouping neutral and contradiction) and define the
provenance based on whether the genre is "fiction."

\paragraph{\textbf{\textcolor{gray}{MIMIC-SubpopBench}} \citep{yangChangeHardCloser2023}}
\textbf{\textcolor{gray}{MIMIC-SubpopBench}} contains clinical notes
recorded during the first 48 hours of a hospital stay, sourced from
the MIMIC-III database.
We follow the preprocessing pipeline from \textsc{SubpopBench}
We define the prediction target as in-hospital mortality and use
patient sex as the provenance.

\section{Experiments}
\label{apd:exp}
\subsection{Datasets}
We subsampled the datasets to introduce spurious correlation in the
training distribution.
We use the sampling parameters $\log\alpha^{tr} = \log\alpha^{val} =
-0.6$, $-1 \leq \log\alpha^{te}
\leq 1$, $|\mathcal{D}^{tr}|:|\mathcal{D}^{val}|:|\mathcal{D}^{te}|=6:2:2$.
With these configurations, we derive 2,162 samples from
\textbf{SHAC},
1,697 samples from \textbf{MIMIC-Location},
6,996 samples from \textbf{HateSpeech},
3,681 samples from \textbf{\textcolor{gray}{MIMIC-SubpopBench}},
8,207 samples in \textbf{\textcolor{gray}{Civilcomments}},
and 62,120 samples from \textbf{\textcolor{gray}{MultiNLI}}.
We alter the random seed in the subsampling process to make the
subsets are seed-dependent and
algorithm-independent (i.e., algorithms are trained and compared
using the same subset).

\subsection{Hyperparameters}
We list the joint distribution of hyperparameters per algorithm for
random search in
\tableref{apd:tab:hyperparameters}.

\begin{table*}[ht]
  \floatconts {apd:tab:hyperparameters}
  {\caption{Hyperparameters, their default values and distributions
  for random search.}}
  {
    \begin{center}
      {
        \begin{tabular}{llll}
          \toprule
          \textbf{Condition} & \textbf{Parameter} & \textbf{Default
          value} & \textbf{Random distribution}\\
          \midrule
          all & learning rate & 1e-5 & $10^{\text{Uniform}(-5, -3.5)}$\\
          & weight decay & 0 & $10^{\text{Uniform}(-6, -2)}$\\
          \midrule
          DANN, CDANN & lambda & 1.0 & $10^{\text{Uniform}(-2, 2)}$\\
          & discriminator weight decay & 0 & $10^{\text{Uniform}(-6, -2)}$\\
          & discriminator steps & 1 & $2^{\text{Uniform}(0, 3)}$\\
          & discriminator width & 256 & $2^{\text{Uniform}(6, 10)}$\\
          & discriminator depth & 3 & $\text{RandomChoice}([3, 4, 5])$\\
          & discriminator dropout & 0 & $\text{RandomChoice}([0, 0.1, 0.5])$\\
          & gradient penalty & 0 & $10^{\text{Uniform}(-2, 1)}$\\
          & adam $\beta_{1}$ & 0.5 & $\text{RandomChoice}([0, 0.5])$\\
          \midrule
          IRM & lambda & 100 & $10^{\text{Uniform}(-1, 5)}$\\
          & iterations of penalty annealing & 500 &
          $10^{\text{Uniform}(0, 4)}$\\
          \midrule
          Mixup & alpha & 0.2 & $10^{\text{Uniform}(0, 4)}$\\
          \midrule
          LISA & alpha & 2.0 & $10^{\text{Uniform}(-1, 1)}$ \\
          & intra-domain mixup ratio & 0.5 & $\text{Uniform}(0, 1)$ \\
          & mixup method & mixup &
          $\text{RandomChoice}([\text{mixup}, \text{cut mixup}])$ \\
          \midrule
          GroupDRO & eta & 0.01 & $10^{\text{Uniform}(-1, 1)}$\\
          \midrule
          {MMD, CORAL} & gamma & 1 & $10^{\text{Uniform}(-1, 1)}$\\
          \midrule
          Fish & meta learning rate & 0.5 &
          $\text{RandomChoice}([0.05, 0.1, 0.5])$\\ \midrule
          MTL & exponential moving average & 0.99 &
          $\text{RandomChoice}([0.5, 0.9, 0.99, 1])$\\
          \midrule
          CAD & lambda & 0.1 & $10^{\text{RandomChoice}([-4, -2, -1,
          0, 1, 2])}$ \\
          & temperature & 0.1 & $\text{RandomChoice}([0.05, 0.1])$\\
          \midrule
          JTT & first stage fraction & 0.5 & $10^{\text{Uniform}(0.2, 0.8)}$ \\
          & lambda $l_{2}$ penalty & 0.1 & $10^{\text{Uniform}(0, 2.5)}$ \\
          \midrule
          DFR & first stage fraction & 0.5 & $10^{\text{Uniform}(0.2, 0.8)}$ \\
          & second stage $l_{2}$ penalty & 0.1 &
          $10^{\text{Uniform}(-2, 0.5)}$ \\
          \midrule
          LfF & amplification degree & 0.1 & $\text{Uniform}(0.05, 0.3)$\\
          \midrule
          DualFilter & mask type & A & $\text{RandomChoice}([D, I, A])$\\
          & mask threshold & 0.5 & $\text{Uniform}(0.5, 0.9)$\\
          & ablation rate & 0.5 & $\text{Uniform}(0.5, 0.9)$\\
          & warm up steps & 50 & $\text{RandomChoice}(10, 25, 50)$\\
          & embedding mask & True & $\text{RandomChoice([False, True])}$\\
          & classifier mask & False & $\text{RandomChoice([False, True])}$\\
          \bottomrule
        \end{tabular}
      }
    \end{center}
  }
\end{table*}

\subsection{Algorithm}
We conduct our evaluation using all algorithms available in the
toolkit, with the notable exception of Backdoor Adjustment
\citep{landeiro2016robust,landeiro2018robust}, an algorithm
deliberately developed to address confounding shift that has
demonstrated utility in addressing confounding by provenance
\citep{dingBackdoorAdjustmentConfounding2024}.
We do not include this algorithm in the current evaluation because it
is not intended for use in the context of end-to-end fine-tuned deep
learning models for NLP, and on account of this underperforms
relative to the algorithms under consideration here.
OOD WGA of BackDoor on each dataset was 0.38, 0.53, 0.39, 0.33, 0.52,
as compared to ERM (0.51 $\pm$ 0.05, 0.51 $\pm$ 0.05, 0.28 $\pm$
  0.02, 0.36 $\pm$ 0.05, 0.83
$\pm$ 0.03) on the \textbf{\textcolor{gray}{Civilcomments}},
\textbf{HateSpeech},
\textbf{\textcolor{gray}{MIMIC-SubpopBench}},
\textbf{\textcolor{gray}{MultiNLI}}, and
\textbf{SHAC} datasets respectively, using the
default hyperparameter.

\subsection{Training Details}
We fine-tune a BERT model (\texttt{bert-base-uncased}) for all
experimental settings \citep{devlin2019bert}.
Following the protocol in \citet{yangChangeHardCloser2023}, we employ early
stopping based on the validation WGA with patience of
three checkpoints.
This strategy is applied to both stages of two-stage algorithms, including
JTT, DFR, and DualFilter.

We train 3,000 steps on \textbf{\textcolor{gray}{MultiNLI}},
1,000 steps on \textbf{MIMIC-Location} and
\textbf{\textcolor{gray}{MIMIC-SubpopBench}},
and 500 steps on \textbf{SHAC},
\textbf{HateSpeech},
and \textbf{\textcolor{gray}{Civilcomments}}.
We checkpoint 10 times for model selection across all experiment settings.
We double the steps for two-stage training algorithms, including JTT,
DFR, and DualFilter.
For the \textbf{\textcolor{gray}{MultiNLI}} dataset, we concatenate
the embedding of premise, hypothesis, their difference,
and their product, aligning with \citet{williams2018broad}.
All training jobs were distributed across two nodes, with the total
experimentation
consuming approximately 1,100 GPU hours.

\subsection{Evaluation Metrics}
\decondtntoolkit supports the calculation of accuracy, f1 score, and
Area Under the Precision-Recall Curve (AURPC), and Expected
Calibration Error (ECE).
These metrics are calculated in provenance-specific, micro-averaged,
macro-averaged, and worst levels.

\section{Infrastructure}
\label{apd:infra}
\subsection{Ablation study on provenance-balanced minibatches}
We intended to follow \textsc{DomainBed} and \textsc{SubpopBench} to
report the ERM with balanced minibatches, which stabilizes the
optimization without altering the learning objective.
An ablation study was conduct on the provenance-balanced-minibatch
technique to investigate its effects on downstream performance.
The ERM without balanced minibatches results in OOD WGA of 0.58,
0.42, 0.30, 0.33, 0.85 using default hyperparameters on the
\textbf{\textcolor{gray}{Civilcomments}},
\textbf{HateSpeech},
\textbf{\textcolor{gray}{MIMIC-SubpopBench}},
\textbf{\textcolor{gray}{MultiNLI}}, and
\textbf{SHAC} datasets, respectively, falling
within ranges of the reported ERM.

\subsection{Training Speed}
Using experiment settings of NVIDIA A100, BERT
(\texttt{bert-base-uncased}), a batch size of 32 per provenance, and
a maximum sequence length of 256, we derived the averaged step time
across available datasets throughout the training process
(\tableref{app:tab:step_time}).

\begin{center}
  \captionof{table}{Mean and standard deviation of step time per algorithm.}
  \label{app:tab:step_time}
  \begin{tabular}{lc}
    \hline
    \textbf{Algorithm} & \textbf{Step Time (s)} \\
    \hline
    CAD & 0.62 (0.19) \\
    CDANN & 0.59 (0.20) \\
    CORAL & 0.60 (0.21) \\
    DANN & 0.59 (0.21) \\
    DFR & 0.22 (0.08) \\
    DownSampling & 0.57 (0.22) \\
    DualFilter & 0.62 (0.17) \\
    ERM & 0.58 (0.21) \\
    Fish & 1.34 (0.46) \\
    GroupDRO & 0.61 (0.21) \\
    IRM & 0.61 (0.21) \\
    JTT & 0.78 (0.26) \\
    LISA & 0.59 (0.22) \\
    LfF & 1.27 (0.44) \\
    MMD & 0.62 (0.21) \\
    MTL & 0.62 (0.22) \\
    Mixup & 0.62 (0.22) \\
    UpSampling & 0.58 (0.22) \\
    \hline
  \end{tabular}
\end{center}

\section{Full Results}
\label{apd:results}
\subsection{Optimal Hyperparameter}
We list the optimal hyperparameters per algorithm for random search in
\tableref{apd:tab:optimalhyper1} and \tableref{apd:tab:optimalhyper2} for
\textbf{Source Datasets}
and \tableref{apd:tab:optimalhyper1_app} and
\tableref{apd:tab:optimalhyper2_app} for
\textbf{\textcolor{gray}{Attribute Datasets}}.

\subsection{Figures}
In-distribution (solid) and OOD (dashed) WGA for ERM on
\textbf{\textcolor{gray}{Attribute Datasets}}
are demonstrated in \figureref{fig:wga_vs_steps_app}.
The relationship between OOD WGA vs. ID WGA and OOD WGA and $alpha$ are on
\textbf{\textcolor{gray}{Attribute Datasets}} illustrated in
\figureref{fig:wga_indicator_app}.

\begin{figure*}[t]
  \small
  \floatconts {fig:wga_vs_steps_app}
  {\caption{In-distribution (solid) and OOD (dashed) WGA for ERM in
      \textbf{\textcolor{gray}{Attribute Datasets}}. Y-axis: Worst Group
      Accuracy (WGA). X-axis: normalized progress of training. The gap
      between the dashed and solid lines shows that models make
      inaccurate predictions in the OOD setting throughout their training
  process. }}
  {\includegraphics[width=\textwidth / 2]{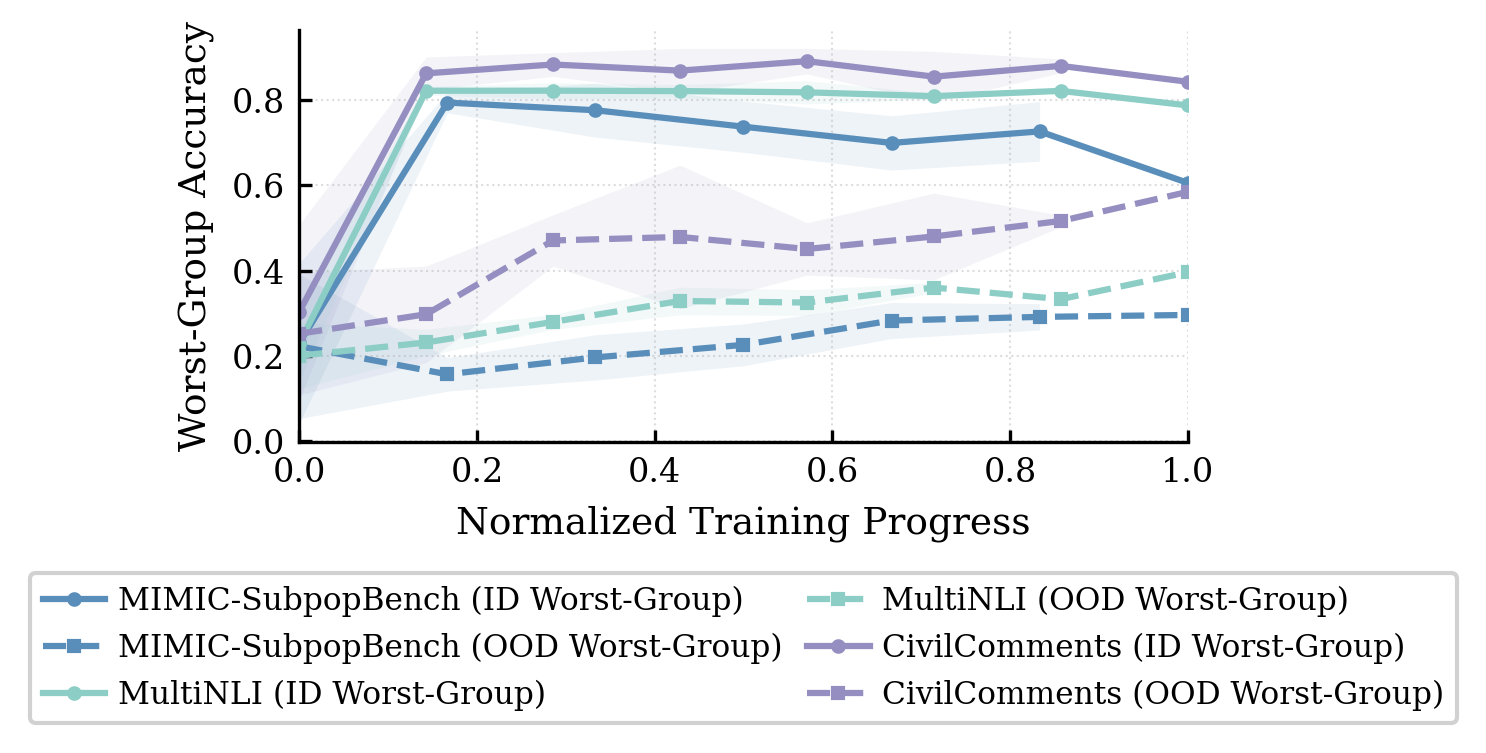}}
\end{figure*}

\begin{figure*}[t]
  \floatconts
  {fig:wga_indicator_app}
  {\caption{WGA is not ``on the line" with respect to ID performance,
      but exhibits a strong linear relationship with the shift parameter
  $\alpha$ in \textbf{\textcolor{gray}{Attribute Datasets}}.}}
  {\includegraphics[width=\textwidth]{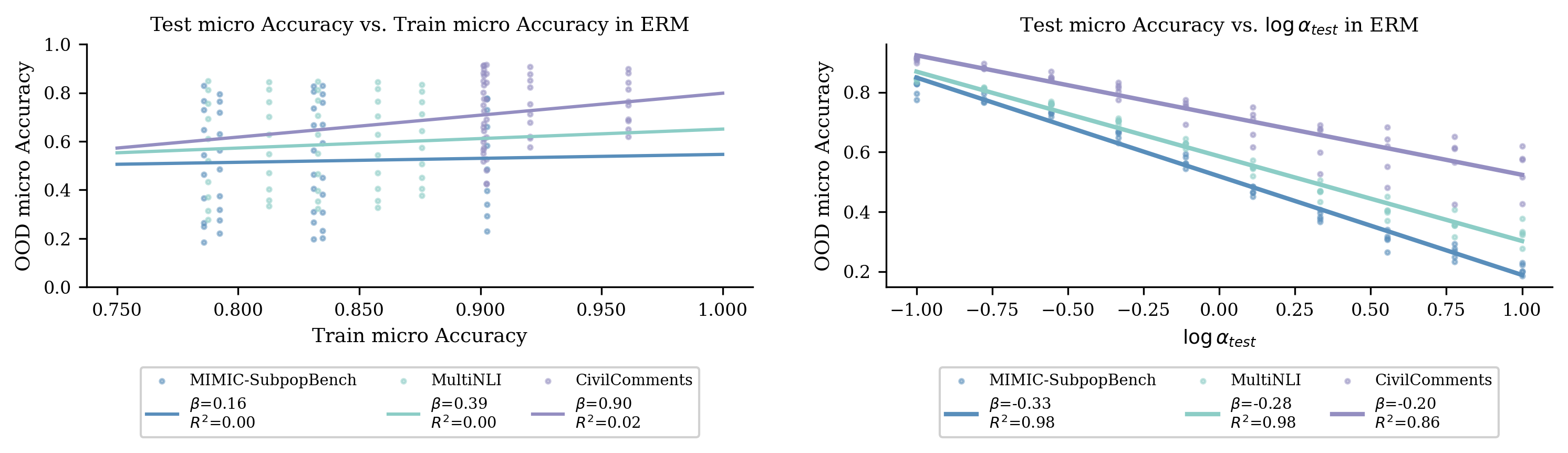}}
\end{figure*}

\subsection{Full Metrics}
We list OOD WGA across algorithms in
\textbf{\textcolor{gray}{Attribute Datasets}}
in \tableref{apd:tab:wga_app}.
Similar to WGA, we consider provenance in metric calculation.
We list the micro-averaged and provenance-macro-averaged
AUPRC per algorithm calculated at $\alpha^{te}=0.6$ for
\textbf{Source Datasets} in
\tableref{apd:tab:microauprc} and \tableref{apd:tab:macroauprc}.
Similarly, the micro-averaged and provenance-macro-averaged
AUPRC for \textbf{\textcolor{gray}{Attribute Datasets}} are listed
in \tableref{apd:tab:microauprc_app} and \tableref{apd:tab:macroauprc_app}.

\begin{table*}[ht]
  \floatconts {apd:tab:optimalhyper1}
  {\caption{Hyperparameters and Optimal Values in
  \textbf{Source Datasets}}}
  {
    \footnotesize
    \begin{tabular}{llccc}
      \toprule
      \textbf{Algorithm} & \textbf{Parameter} &
      \multicolumn{3}{c}{\textbf{Optimal Value}} \\
      \cmidrule(lr){3-5}
      & & \textbf{HateSpeech} &
      \textbf{MIMIC-Location} &
      \textbf{SHAC} \\
      \midrule
      \textbf{CAD} & lambda & 1.0e-03 & 1.0e-04 & 1.0e-03 \\  &
      learning rate & 1.6e-05 & 2.1e-05 & 2.2e-05 \\  & temperature &
      0.1 & 0.1 & 0.1 \\  & weight decay & 3.5e-05 & 1.1e-03 &
      1.0e-05 \\ \midrule \textbf{CDANN} & adam $\beta_{1}$ & 0.5 &
      0.5 & 0 \\  & discriminator depth & 3 & 3 & 3 \\  &
      discriminator dropout & 0.5 & 0 & 0.1 \\  & discriminator
      learning rate & 1.1e-05 & 5.0e-05 & 1.8e-04 \\  & discriminator
      steps & 2 & 1 & 1 \\  & discriminator weight decay & 1.5e-05 &
      0 & 7.2e-04 \\  & discriminator width & 220 & 256 & 829 \\  &
      generator learning rate & 4.6e-05 & 5.0e-05 & 7.1e-05 \\  &
      generator weight decay & 1.4e-06 & 0 & 2.2e-03 \\  & gradient
      penalty & 0.5 & 0 & 0.0 \\  & lambda & 0.8 & 1.0 & 5.8 \\  & lr
      & 1.6e-05 & 1.0e-05 & 2.2e-05 \\  & weight decay & 3.5e-05 & 0
      & 1.0e-05 \\ \midrule \textbf{CORAL} & gamma & 0.3 & 1.6 & 0.3
      \\  & learning rate & 7.7e-05 & 1.4e-05 & 6.7e-05 \\  & weight
      decay & 1.8e-04 & 7.9e-06 & 2.8e-06 \\ \midrule \textbf{DANN} &
      adam $\beta_{1}$ & 0.5 & 0.5 & 0.5 \\  & discriminator depth &
      3 & 3 & 3 \\  & discriminator dropout & 0 & 0 & 0 \\  &
      discriminator learning rate & 5.0e-05 & 5.0e-05 & 5.0e-05 \\  &
      discriminator steps & 1 & 1 & 1 \\  & discriminator weight
      decay & 0 & 0 & 0 \\  & discriminator width & 256 & 256 & 256
      \\  & generator learning rate & 5.0e-05 & 5.0e-05 & 5.0e-05 \\
      & generator weight decay & 0 & 0 & 0 \\  & gradient penalty & 0
      & 0 & 0 \\  & lambda & 1.0 & 1.0 & 1.0 \\  & lr & 1.0e-05 &
      1.0e-05 & 1.0e-05 \\  & weight decay & 0 & 0 & 0 \\ \midrule
      \textbf{DFR} & first stage fraction & 0.7 & 0.6 & 0.3 \\  &
      learning rate & 6.1e-05 & 7.7e-05 & 2.1e-05 \\  & second stage
      $l_{2}$ penalty & 0.3 & 1.6 & 0.0 \\  & weight decay & 1.4e-05
      & 1.8e-04 & 1.1e-03 \\ \midrule \textbf{DownSampling} &
      learning rate & 5.2e-05 & 4.4e-05 & 5.2e-05 \\  & weight decay
      & 2.8e-05 & 7.3e-03 & 2.8e-05 \\ \midrule \textbf{DualFilter} &
      ablation rate & 0.8 & 0.5 & 0.7 \\  & classifier mask & 0 &
      False & 0 \\  & embedding mask & 0 & True & 0 \\  & first stage
      fraction & 0.5 & 0.5 & 0.5 \\  & learning rate & 6.1e-05 &
      1.0e-05 & 5.2e-05 \\  & mask threshold & 0.9 & 0.5 & 0.6 \\  &
      mask type & I & A & A \\  & warm up steps & 50 & 50 & 50 \\  &
      weight decay & 1.4e-05 & 0 & 2.8e-05 \\
      \bottomrule
    \end{tabular}
  }
\end{table*}

\begin{table*}[ht]
  \floatconts {apd:tab:optimalhyper2}
  {\caption{Hyperparameters and Optimal Values in
  \textbf{Source Datasets} (Continued)}}
  {
    \footnotesize
    \begin{tabular}{llccc}
      \toprule
      \textbf{Algorithm} & \textbf{Parameter} &
      \multicolumn{3}{c}{\textbf{Optimal Value}} \\
      \cmidrule(lr){3-5}
      & & \textbf{HateSpeech} &
      \textbf{MIMIC-Location} &
      \textbf{SHAC} \\
      \midrule
      \textbf{ERM} & learning rate & 5.2e-05 & 2.2e-05 & 8.9e-06 \\
      & weight decay & 2.8e-05 & 1.0e-05 & 2.2e-05 \\ \midrule
      \textbf{Fish} & learning rate & 6.7e-05 & 6.7e-05 & 6.7e-05 \\
      & meta learning rate & 0.5 & 0.5 & 0.5 \\  & weight decay &
      2.8e-06 & 2.8e-06 & 2.8e-06 \\ \midrule \textbf{GroupDRO} & eta
      & 1.0e-02 & 0.0 & 0.0 \\  & learning rate & 1.0e-05 & 7.7e-05 &
      8.5e-06 \\  & weight decay & 0 & 1.8e-04 & 4.5e-04 \\ \midrule
      \textbf{IRM} & iterations of penalty annealing & 247 & 247 &
      3211 \\  & lambda & 1.9 & 1.9 & 2.7e+02 \\  & learning rate &
      6.1e-05 & 6.1e-05 & 2.1e-05 \\  & weight decay & 1.4e-05 &
      1.4e-05 & 1.1e-03 \\ \midrule \textbf{JTT} & first stage
      fraction & 0.5 & 0.6 & 0.6 \\  & lambda $l_{2}$ penalty & 10.0
      & 9.6 & 34.9 \\  & learning rate & 1.8e-05 & 7.7e-05 & 6.7e-05
      \\  & weight decay & 5.1e-03 & 1.8e-04 & 2.8e-06 \\ \midrule
      \textbf{LISA} & alpha & 6.2 & 1.3 & 3.2 \\  & intra-domain
      mixup ratio & 0.9 & 0.0 & 1.0 \\  & learning rate & 6.7e-05 &
      8.5e-06 & 2.2e-05 \\  & mixup method & mixup & cutmix & cutmix
      \\  & weight decay & 2.8e-06 & 4.5e-04 & 1.0e-05 \\ \midrule
      \textbf{LfF} & amplification degree & 0.1 & 0.1 & 0.1 \\  &
      learning rate & 1.0e-05 & 1.0e-05 & 8.9e-06 \\  & weight decay
      & 0 & 0 & 2.2e-05 \\ \midrule \textbf{MMD} & gamma & 0.1 & 0.1
      & 0.4 \\  & learning rate & 4.4e-05 & 4.4e-05 & 1.4e-05 \\  &
      weight decay & 7.3e-03 & 7.3e-03 & 7.9e-06 \\ \midrule
      \textbf{MTL} & exponential moving average & 1.0 & 1.0 & 1.0 \\
      & learning rate & 7.7e-05 & 2.2e-05 & 7.7e-05 \\  & weight
      decay & 1.8e-04 & 1.0e-05 & 1.8e-04 \\ \midrule \textbf{Mixup}
      & alpha & 0.6 & 0.3 & 3.1 \\  & learning rate & 2.2e-05 &
      1.4e-05 & 6.1e-05 \\  & weight decay & 1.0e-05 & 7.9e-06 &
      1.4e-05 \\ \midrule \textbf{UpSampling} & learning rate &
      6.7e-05 & 6.1e-05 & 4.4e-05 \\  & weight decay & 2.8e-06 &
      1.4e-05 & 7.3e-03 \\
      \bottomrule
    \end{tabular}
  }
\end{table*}

\begin{table*}[ht]
  \floatconts {apd:tab:optimalhyper1_app}
  {\caption{Hyperparameters and Optimal Values in
  \textbf{\textcolor{gray}{Attribute Datasets}}}}
  {
    \footnotesize
    \begin{tabular}{llccc}
      \toprule
      \textbf{Algorithm} & \textbf{Parameter} &
      \multicolumn{3}{c}{\textbf{Optimal Value}} \\
      \cmidrule(lr){3-5}
      & & \textbf{\textcolor{gray}{Civilcomments}} &
      \textbf{\textcolor{gray}{MIMIC-SubpopBench}} &
      \textbf{\textcolor{gray}{MultiNLI}} \\
      \midrule
      \textbf{CAD} & lambda & 1.0e-03 & 1.0e-04 & 1.0e-03 \\  &
      learning rate & 1.6e-05 & 6.1e-05 & 4.2e-05 \\  & temperature &
      0.1 & 0.1 & 0.1 \\  & weight decay & 3.5e-05 & 1.4e-05 &
      1.7e-05 \\ \midrule \textbf{CDANN} & adam $\beta_{1}$ & 0.5 &
      0.5 & 0.5 \\  & discriminator depth & 3 & 5 & 4 \\  &
      discriminator dropout & 0 & 0.5 & 0.5 \\  & discriminator
      learning rate & 5.0e-05 & 1.2e-04 & 1.3e-05 \\  & discriminator
      steps & 1 & 1 & 2 \\  & discriminator weight decay & 0 &
      5.2e-03 & 1.0e-04 \\  & discriminator width & 256 & 127 & 106
      \\  & generator learning rate & 5.0e-05 & 6.8e-05 & 2.2e-04 \\
      & generator weight decay & 0 & 1.3e-06 & 1.8e-06 \\  & gradient
      penalty & 0 & 0.1 & 0.2 \\  & lambda & 1.0 & 0.0 & 0.0 \\  & lr
      & 1.0e-05 & 9.1e-05 & 7.7e-05 \\  & weight decay & 0 & 6.2e-04
      & 1.8e-04 \\ \midrule \textbf{CORAL} & gamma & 0.6 & 0.5 & 0.6
      \\  & learning rate & 5.2e-05 & 2.1e-05 & 4.2e-05 \\  & weight
      decay & 2.8e-05 & 1.1e-03 & 1.7e-05 \\ \midrule \textbf{DANN} &
      adam $\beta_{1}$ & 0.5 & 0.5 & 0.5 \\  & discriminator depth &
      3 & 5 & 4 \\  & discriminator dropout & 0.5 & 0.1 & 0.5 \\  &
      discriminator learning rate & 1.1e-05 & 1.7e-05 & 1.3e-05 \\  &
      discriminator steps & 2 & 4 & 2 \\  & discriminator weight
      decay & 1.5e-05 & 4.2e-03 & 1.0e-04 \\  & discriminator width &
      220 & 497 & 106 \\  & generator learning rate & 4.6e-05 &
      6.0e-05 & 2.2e-04 \\  & generator weight decay & 1.4e-06 &
      1.9e-03 & 1.8e-06 \\  & gradient penalty & 0.5 & 0.0 & 0.2 \\
      & lambda & 0.8 & 4.9 & 0.0 \\  & lr & 1.6e-05 & 1.8e-05 &
      7.7e-05 \\  & weight decay & 3.5e-05 & 5.1e-03 & 1.8e-04
      \\ \midrule \textbf{DFR} & first stage fraction & 0.7 & 0.3 &
      0.6 \\  & learning rate & 6.1e-05 & 2.1e-05 & 6.7e-05 \\  &
      second stage $l_{2}$ penalty & 0.3 & 0.0 & 0.0 \\  & weight
      decay & 1.4e-05 & 1.1e-03 & 2.8e-06 \\ \midrule
      \textbf{DownSampling} & learning rate & 4.2e-05 & 8.9e-06 &
      9.9e-06 \\  & weight decay & 1.7e-05 & 2.2e-05 & 6.3e-06
      \\ \midrule \textbf{DualFilter} & ablation rate & 0.7 & 0.8 &
      0.8 \\  & classifier mask & 0 & 0 & 0 \\  & embedding mask & 0
      & 0 & 0 \\  & first stage fraction & 0.5 & 0.5 & 0.5 \\  &
      learning rate & 5.2e-05 & 6.1e-05 & 6.1e-05 \\  & mask
      threshold & 0.6 & 0.9 & 0.9 \\  & mask type & A & I & I \\  &
      warm up steps & 50 & 50 & 50 \\  & weight decay & 2.8e-05 &
      1.4e-05 & 1.4e-05 \\
      \bottomrule
    \end{tabular}
  }
\end{table*}

\begin{table*}[ht]
  \floatconts {apd:tab:optimalhyper2_app}
  {\caption{Hyperparameters and Optimal Values in
  \textbf{\textcolor{gray}{Attribute Datasets}} (Continued)}}
  {
    \footnotesize
    \begin{tabular}{llccc}
      \toprule
      \textbf{Algorithm} & \textbf{Parameter} &
      \multicolumn{3}{c}{\textbf{Optimal Value}} \\
      \cmidrule(lr){3-5}
      & & \textbf{\textcolor{gray}{Civilcomments}} &
      \textbf{\textcolor{gray}{MIMIC-SubpopBench}} &
      \textbf{\textcolor{gray}{MultiNLI}} \\
      \midrule
      \textbf{ERM} & learning rate & 4.4e-05 & 8.9e-06 & 4.2e-05 \\
      & weight decay & 7.3e-03 & 2.2e-05 & 1.7e-05 \\ \midrule
      \textbf{Fish} & learning rate & 9.1e-05 & 8.9e-06 & 1.6e-05 \\
      & meta learning rate & 0.1 & 0.1 & 0.5 \\  & weight decay &
      6.2e-04 & 2.2e-05 & 3.5e-05 \\ \midrule \textbf{GroupDRO} & eta
      & 0.0 & 0.0 & 0.0 \\  & learning rate & 9.9e-06 & 1.8e-05 &
      6.7e-05 \\  & weight decay & 6.3e-06 & 5.1e-03 & 2.8e-06
      \\ \midrule \textbf{IRM} & iterations of penalty annealing &
      3001 & 3211 & 3775 \\  & lambda & 29.4 & 2.7e+02 & 7.5e+04 \\
      & learning rate & 2.2e-05 & 2.1e-05 & 7.7e-05 \\  & weight
      decay & 1.0e-05 & 1.1e-03 & 1.8e-04 \\ \midrule \textbf{JTT} &
      first stage fraction & 0.3 & 0.7 & 0.8 \\  & lambda $l_{2}$
      penalty & 2.9 & 1.3e+02 & 34.7 \\  & learning rate & 2.1e-05 &
      6.1e-05 & 9.1e-05 \\  & weight decay & 1.1e-03 & 1.4e-05 &
      6.2e-04 \\ \midrule \textbf{LISA} & alpha & 0.2 & 0.1 & 0.2 \\
      & intra-domain mixup ratio & 0.1 & 0.9 & 0.6 \\  & learning
      rate & 4.2e-05 & 8.9e-06 & 5.2e-05 \\  & mixup method & mixup &
      mixup & mixup \\  & weight decay & 1.7e-05 & 2.2e-05 & 2.8e-05
      \\ \midrule \textbf{LfF} & amplification degree & 0.1 & 0.1 &
      0.1 \\  & learning rate & 8.5e-06 & 8.5e-06 & 1.4e-05 \\  &
      weight decay & 4.5e-04 & 4.5e-04 & 7.9e-06 \\ \midrule
      \textbf{MMD} & gamma & 0.1 & 0.1 & 0.1 \\  & learning rate &
      4.4e-05 & 4.4e-05 & 4.4e-05 \\  & weight decay & 7.3e-03 &
      7.3e-03 & 7.3e-03 \\ \midrule \textbf{MTL} & exponential moving
      average & 0.9 & 0.9 & 1.0 \\  & learning rate & 1.4e-05 &
      6.7e-05 & 5.2e-05 \\  & weight decay & 7.9e-06 & 2.8e-06 &
      2.8e-05 \\ \midrule \textbf{Mixup} & alpha & 0.1 & 0.6 & 0.2
      \\  & learning rate & 8.9e-06 & 2.2e-05 & 7.7e-05 \\  & weight
      decay & 2.2e-05 & 1.0e-05 & 1.8e-04 \\ \midrule
      \textbf{UpSampling} & learning rate & 5.2e-05 & 1.0e-05 &
      6.7e-05 \\  & weight decay & 2.8e-05 & 0 & 2.8e-06 \\
      \bottomrule
    \end{tabular}
  }
\end{table*}

\begin{table*}[h]
  \floatconts {apd:tab:wga_app}
  {\caption{OOD WGA across algorithms in
  \textbf{\textcolor{gray}{Attribute Datasets}}}}
  {
    \begin{center}
      \begin{tabular}{lcccc}
        \toprule
        \textbf{Algorithm} &
        \textbf{\textcolor{gray}{MIMIC-SubpopBench}} &
        \textbf{\textcolor{gray}{MultiNLI}} &
        \textbf{\textcolor{gray}{Civilcomments}} & \textbf{Avg} \\
        \midrule
        ERM & 0.28 $\pm$ 0.02 & 0.36 $\pm$ 0.05 & 0.51 $\pm$ 0.05 & 0.38 \\
        UpSampling & 0.46 $\pm$ 0.03 & 0.56 $\pm$ 0.02 & 0.67 $\pm$
        0.03 & 0.56 \\
        DownSampling & 0.51 $\pm$ 0.05 & 0.63 $\pm$ 0.00 & 0.72 $\pm$
        0.02 & 0.62 \\
        \midrule
        CAD & 0.30 $\pm$ 0.06 & 0.34 $\pm$ 0.02 & 0.58 $\pm$ 0.06 & 0.41 \\
        CDANN & 0.31 $\pm$ 0.02 & 0.33 $\pm$ 0.04 & 0.50 $\pm$ 0.07 & 0.38 \\
        CORAL & 0.30 $\pm$ 0.03 & 0.32 $\pm$ 0.04 & 0.53 $\pm$ 0.09 & 0.39 \\
        DANN & 0.27 $\pm$ 0.02 & 0.31 $\pm$ 0.03 & 0.45 $\pm$ 0.04 & 0.34 \\
        DFR & 0.37 $\pm$ 0.05 & 0.60 $\pm$ 0.01 & 0.58 $\pm$ 0.07 & 0.52 \\
        DualFilter & 0.31 $\pm$ 0.06 & 0.40 $\pm$ 0.02 & 0.53 $\pm$
        0.05 & 0.41 \\
        Fish & 0.30 $\pm$ 0.03 & 0.31 $\pm$ 0.01 & 0.49 $\pm$ 0.05 & 0.37 \\
        GroupDRO & 0.28 $\pm$ 0.03 & 0.35 $\pm$ 0.04 & 0.52 $\pm$ 0.05 & 0.38 \\
        IRM & 0.31 $\pm$ 0.04 & 0.40 $\pm$ 0.04 & 0.54 $\pm$ 0.07 & 0.42 \\
        JTT & 0.28 $\pm$ 0.06 & 0.34 $\pm$ 0.07 & 0.52 $\pm$ 0.06 & 0.38 \\
        LISA & 0.47 $\pm$ 0.02 & 0.64 $\pm$ 0.01 & 0.67 $\pm$ 0.05 & 0.60 \\
        LfF & 0.32 $\pm$ 0.06 & 0.35 $\pm$ 0.04 & 0.53 $\pm$ 0.03 & 0.40 \\
        MMD & 0.34 $\pm$ 0.03 & 0.35 $\pm$ 0.03 & 0.52 $\pm$ 0.05 & 0.40 \\
        MTL & 0.34 $\pm$ 0.06 & 0.36 $\pm$ 0.04 & 0.53 $\pm$ 0.04 & 0.41 \\
        Mixup & 0.29 $\pm$ 0.02 & 0.36 $\pm$ 0.05 & 0.53 $\pm$ 0.06 & 0.39 \\
        \bottomrule
      \end{tabular}
    \end{center}
  }
\end{table*}

\begin{table*}[h]
  \floatconts {apd:tab:microauprc}
  {\caption{OOD micro-averaged Area Under the Precision-Recall Curve
  in \textbf{Source Datasets}}}
  {
    \begin{tabular}{lcccc}
      \toprule
      \textbf{Algorithm} & \textbf{SHAC} &
      \textbf{MIMIC-Location} &
      \textbf{HateSpeech} & \textbf{Avg} \\
      \midrule
      ERM & 0.91 $\pm$ 0.02 & 0.48 $\pm$ 0.04 & 0.69 $\pm$ 0.06 & 0.69 \\
      UpSampling & 0.96 $\pm$ 0.01 & 0.53 $\pm$ 0.03 & 0.76 $\pm$ 0.03 & 0.75 \\
      DownSampling & 0.95 $\pm$ 0.01 & 0.54 $\pm$ 0.02 & 0.80 $\pm$
      0.02 & 0.76 \\
      \midrule
      CAD & 0.92 $\pm$ 0.02 & 0.49 $\pm$ 0.03 & 0.66 $\pm$ 0.03 & 0.69 \\
      CDANN & 0.95 $\pm$ 0.03 & 0.49 $\pm$ 0.02 & 0.66 $\pm$ 0.03 & 0.70 \\
      CORAL & 0.95 $\pm$ 0.01 & 0.50 $\pm$ 0.03 & 0.70 $\pm$ 0.04 & 0.72 \\
      DANN & 0.95 $\pm$ 0.02 & 0.50 $\pm$ 0.04 & 0.71 $\pm$ 0.03 & 0.72 \\
      DFR & 0.93 $\pm$ 0.01 & 0.49 $\pm$ 0.02 & 0.69 $\pm$ 0.03 & 0.71 \\
      DualFilter & 0.94 $\pm$ 0.02 & 0.49 $\pm$ 0.02 & 0.72 $\pm$ 0.03 & 0.72 \\
      Fish & 0.93 $\pm$ 0.02 & 0.49 $\pm$ 0.03 & 0.66 $\pm$ 0.03 & 0.69 \\
      GroupDRO & 0.90 $\pm$ 0.03 & 0.50 $\pm$ 0.02 & 0.67 $\pm$ 0.02 & 0.69 \\
      IRM & 0.93 $\pm$ 0.02 & 0.50 $\pm$ 0.03 & 0.69 $\pm$ 0.05 & 0.70 \\
      JTT & 0.92 $\pm$ 0.05 & 0.49 $\pm$ 0.03 & 0.67 $\pm$ 0.03 & 0.70 \\
      LISA & 0.94 $\pm$ 0.04 & 0.52 $\pm$ 0.02 & 0.77 $\pm$ 0.01 & 0.74 \\
      LfF & 0.87 $\pm$ 0.05 & 0.49 $\pm$ 0.02 & 0.61 $\pm$ 0.03 & 0.66 \\
      MMD & 0.93 $\pm$ 0.02 & 0.47 $\pm$ 0.02 & 0.69 $\pm$ 0.02 & 0.70 \\
      MTL & 0.92 $\pm$ 0.02 & 0.48 $\pm$ 0.02 & 0.64 $\pm$ 0.04 & 0.68 \\
      Mixup & 0.91 $\pm$ 0.04 & 0.50 $\pm$ 0.02 & 0.71 $\pm$ 0.02 & 0.71 \\
      \bottomrule
    \end{tabular}
  }
\end{table*}

\begin{table*}[h]
  \floatconts {apd:tab:macroauprc}
  {\caption{OOD macro-averaged Area Under the Precision-Recall Curve
  on \textbf{Source Datasets}}}
  {
    \begin{tabular}{lcccc}
      \toprule
      \textbf{Algorithm} & \textbf{SHAC} &
      \textbf{MIMIC-Location} &
      \textbf{HateSpeech} & \textbf{Avg} \\
      \midrule
      ERM & 0.92 $\pm$ 0.03 & 0.57 $\pm$ 0.02 & 0.73 $\pm$ 0.03 & 0.74 \\
      UpSampling & 0.96 $\pm$ 0.01 & 0.57 $\pm$ 0.03 & 0.75 $\pm$ 0.03 & 0.76 \\
      DownSampling & 0.93 $\pm$ 0.02 & 0.54 $\pm$ 0.02 & 0.71 $\pm$
      0.01 & 0.73 \\
      \midrule
      CAD & 0.94 $\pm$ 0.02 & 0.57 $\pm$ 0.02 & 0.74 $\pm$ 0.02 & 0.75 \\
      CDANN & 0.94 $\pm$ 0.02 & 0.55 $\pm$ 0.02 & 0.72 $\pm$ 0.02 & 0.74 \\
      CORAL & 0.91 $\pm$ 0.02 & 0.57 $\pm$ 0.02 & 0.72 $\pm$ 0.03 & 0.74 \\
      DANN & 0.89 $\pm$ 0.08 & 0.54 $\pm$ 0.02 & 0.72 $\pm$ 0.03 & 0.72 \\
      DFR & 0.94 $\pm$ 0.02 & 0.55 $\pm$ 0.03 & 0.75 $\pm$ 0.03 & 0.75 \\
      DualFilter & 0.95 $\pm$ 0.01 & 0.57 $\pm$ 0.02 & 0.75 $\pm$ 0.04 & 0.76 \\
      Fish & 0.94 $\pm$ 0.02 & 0.57 $\pm$ 0.02 & 0.72 $\pm$ 0.03 & 0.74 \\
      GroupDRO & 0.93 $\pm$ 0.04 & 0.57 $\pm$ 0.02 & 0.73 $\pm$ 0.03 & 0.74 \\
      IRM & 0.93 $\pm$ 0.01 & 0.57 $\pm$ 0.02 & 0.72 $\pm$ 0.01 & 0.74 \\
      JTT & 0.93 $\pm$ 0.06 & 0.57 $\pm$ 0.03 & 0.73 $\pm$ 0.02 & 0.74 \\
      LISA & 0.93 $\pm$ 0.05 & 0.55 $\pm$ 0.02 & 0.71 $\pm$ 0.02 & 0.73 \\
      LfF & 0.91 $\pm$ 0.04 & 0.55 $\pm$ 0.01 & 0.69 $\pm$ 0.02 & 0.72 \\
      MMD & 0.91 $\pm$ 0.03 & 0.55 $\pm$ 0.01 & 0.73 $\pm$ 0.01 & 0.73 \\
      MTL & 0.90 $\pm$ 0.04 & 0.55 $\pm$ 0.03 & 0.67 $\pm$ 0.02 & 0.71 \\
      Mixup & 0.84 $\pm$ 0.05 & 0.57 $\pm$ 0.03 & 0.72 $\pm$ 0.02 & 0.71 \\
      \bottomrule
    \end{tabular}
  }
\end{table*}

\begin{table*}[h]
  \floatconts {apd:tab:microauprc_app}
  {\caption{OOD micro-averaged Area Under the Precision-Recall Curve
  on \textbf{\textcolor{gray}{Attribute Datasets}}}}
  {
    \begin{tabular}{lcccc}
      \toprule
      \textbf{Algorithm} &
      \textbf{\textcolor{gray}{MIMIC-SubpopBench}} &
      \textbf{\textcolor{gray}{MultiNLI}} &
      \textbf{\textcolor{gray}{Civilcomments}} & \textbf{Avg} \\
      \midrule
      ERM & 0.39 $\pm$ 0.02 & 0.49 $\pm$ 0.02 & 0.66 $\pm$ 0.10 & 0.51 \\
      UpSampling & 0.48 $\pm$ 0.03 & 0.64 $\pm$ 0.02 & 0.83 $\pm$ 0.01 & 0.65 \\
      DownSampling & 0.56 $\pm$ 0.03 & 0.72 $\pm$ 0.01 & 0.85 $\pm$
      0.02 & 0.71 \\
      \midrule
      CAD & 0.39 $\pm$ 0.02 & 0.46 $\pm$ 0.02 & 0.71 $\pm$ 0.04 & 0.52 \\
      CDANN & 0.40 $\pm$ 0.02 & 0.46 $\pm$ 0.02 & 0.68 $\pm$ 0.05 & 0.51 \\
      CORAL & 0.40 $\pm$ 0.02 & 0.47 $\pm$ 0.03 & 0.71 $\pm$ 0.02 & 0.53 \\
      DANN & 0.38 $\pm$ 0.01 & 0.48 $\pm$ 0.05 & 0.64 $\pm$ 0.05 & 0.50 \\
      DFR & 0.45 $\pm$ 0.02 & 0.68 $\pm$ 0.01 & 0.71 $\pm$ 0.03 & 0.61 \\
      DualFilter & 0.42 $\pm$ 0.03 & 0.53 $\pm$ 0.03 & 0.71 $\pm$ 0.02 & 0.55 \\
      Fish & 0.38 $\pm$ 0.01 & 0.45 $\pm$ 0.02 & 0.60 $\pm$ 0.06 & 0.48 \\
      GroupDRO & 0.37 $\pm$ 0.01 & 0.49 $\pm$ 0.03 & 0.65 $\pm$ 0.02 & 0.50 \\
      IRM & 0.41 $\pm$ 0.02 & 0.50 $\pm$ 0.04 & 0.67 $\pm$ 0.03 & 0.53 \\
      JTT & 0.42 $\pm$ 0.04 & 0.49 $\pm$ 0.04 & 0.66 $\pm$ 0.04 & 0.52 \\
      LISA & 0.51 $\pm$ 0.04 & 0.72 $\pm$ 0.02 & 0.84 $\pm$ 0.03 & 0.69 \\
      LfF & 0.40 $\pm$ 0.03 & 0.48 $\pm$ 0.03 & 0.64 $\pm$ 0.06 & 0.51 \\
      MMD & 0.40 $\pm$ 0.01 & 0.48 $\pm$ 0.02 & 0.69 $\pm$ 0.03 & 0.52 \\
      MTL & 0.42 $\pm$ 0.05 & 0.49 $\pm$ 0.04 & 0.63 $\pm$ 0.03 & 0.51 \\
      Mixup & 0.39 $\pm$ 0.02 & 0.48 $\pm$ 0.03 & 0.67 $\pm$ 0.01 & 0.51 \\
      \bottomrule
    \end{tabular}
  }
\end{table*}

\begin{table*}[h]
  \floatconts {apd:tab:macroauprc_app}
  {\caption{OOD macro-averaged Area Under the Precision-Recall Curve
  on \textbf{\textcolor{gray}{Attribute Datasets}}}}
  {
    \begin{tabular}{lcccc}
      \toprule
      \textbf{Algorithm} &
      \textbf{\textcolor{gray}{MIMIC-SubpopBench}} &
      \textbf{\textcolor{gray}{MultiNLI}} &
      \textbf{\textcolor{gray}{Civilcomments}} & \textbf{Avg} \\
      \midrule
      ERM & 0.54 $\pm$ 0.03 & 0.67 $\pm$ 0.01 & 0.84 $\pm$ 0.05 & 0.69 \\
      UpSampling & 0.56 $\pm$ 0.02 & 0.68 $\pm$ 0.01 & 0.84 $\pm$ 0.02 & 0.70 \\
      DownSampling & 0.55 $\pm$ 0.01 & 0.69 $\pm$ 0.01 & 0.82 $\pm$
      0.03 & 0.69 \\
      \midrule
      CAD & 0.53 $\pm$ 0.02 & 0.66 $\pm$ 0.01 & 0.84 $\pm$ 0.03 & 0.68 \\
      CDANN & 0.55 $\pm$ 0.02 & 0.66 $\pm$ 0.01 & 0.83 $\pm$ 0.03 & 0.68 \\
      CORAL & 0.54 $\pm$ 0.02 & 0.67 $\pm$ 0.01 & 0.84 $\pm$ 0.03 & 0.68 \\
      DANN & 0.54 $\pm$ 0.01 & 0.67 $\pm$ 0.01 & 0.84 $\pm$ 0.02 & 0.68 \\
      DFR & 0.57 $\pm$ 0.02 & 0.70 $\pm$ 0.01 & 0.84 $\pm$ 0.01 & 0.70 \\
      DualFilter & 0.54 $\pm$ 0.03 & 0.69 $\pm$ 0.01 & 0.86 $\pm$ 0.01 & 0.69 \\
      Fish & 0.53 $\pm$ 0.02 & 0.66 $\pm$ 0.01 & 0.79 $\pm$ 0.05 & 0.66 \\
      GroupDRO & 0.52 $\pm$ 0.02 & 0.68 $\pm$ 0.01 & 0.83 $\pm$ 0.03 & 0.68 \\
      IRM & 0.54 $\pm$ 0.02 & 0.67 $\pm$ 0.01 & 0.83 $\pm$ 0.02 & 0.68 \\
      JTT & 0.54 $\pm$ 0.03 & 0.67 $\pm$ 0.01 & 0.83 $\pm$ 0.03 & 0.68 \\
      LISA & 0.56 $\pm$ 0.01 & 0.71 $\pm$ 0.01 & 0.85 $\pm$ 0.03 & 0.71 \\
      LfF & 0.53 $\pm$ 0.03 & 0.66 $\pm$ 0.01 & 0.79 $\pm$ 0.05 & 0.66 \\
      MMD & 0.55 $\pm$ 0.01 & 0.67 $\pm$ 0.01 & 0.84 $\pm$ 0.02 & 0.69 \\
      MTL & 0.53 $\pm$ 0.02 & 0.64 $\pm$ 0.03 & 0.75 $\pm$ 0.02 & 0.64 \\
      Mixup & 0.53 $\pm$ 0.03 & 0.66 $\pm$ 0.02 & 0.83 $\pm$ 0.02 & 0.67 \\
      \bottomrule
    \end{tabular}
  }
\end{table*}

\end{document}